%% file: main.tex
\documentclass[journal]{IEEEtran}
\usepackage[T1]{fontenc}
 \usepackage{cite}
\usepackage{amsmath,amssymb,amsfonts}
\usepackage{amsthm}
\usepackage{graphicx}
\usepackage{algorithm}
\usepackage{algpseudocode}
\usepackage{textcomp}
\usepackage{booktabs}
\usepackage[numbers]{natbib}
\usepackage{xcolor}
\usepackage{hyperref}
\usepackage{ragged2e}
\usepackage{cuted}
\usepackage{subcaption}
\usepackage{stfloats}
\usepackage{booktabs,tabularx,array}
\newcolumntype{Y}{>{\raggedright\arraybackslash}X}
\usepackage[font=small,labelfont=normalfont]{caption}
\newcommand{\inlineeq}[2][]{%
  \refstepcounter{equation}%
  \ensuremath{#2}\ \textup{(\theequation)}%
  \if\relax\detokenize{#1}\relax\else\label{#1}\fi
}
\usepackage{soul}
\newtheorem{assumption}{Assumption}
\newtheorem{theorem}{Theorem}
\newtheorem{lemma}{Lemma}

\title{Efficient Bilevel Optimization for Meta Label Correction in Noisy Label Learning
}

\author{Ba Hoang Anh Nguyen, Viet Cuong Ta
\thanks{Ba Hoang Anh Nguyen and Viet Cuong Ta are with Human-Machine Interaction Laboratory, VNU University of Engineering and Technology, Hanoi, Vietnam. E-mail: 22021100@vnu.edu.vn, cuongtv@vnu.edu.vn.}
\thanks{Corresponding author: Viet Cuong Ta.}
}

\begin{document}

\maketitle

\begin{abstract}
Training a deep neural network with noisy labels could reduce data annotation cost but may introduce noise into the learned model.
In meta label correction approaches, an additional meta model besides the main model is trained with a small, clean dataset to correct the large, noisy dataset.
However, the update of the meta model requires the computation of hypergradients at the inner step of the main model which significantly increases the computational cost.
To improve the training efficiency, we first introduce the dynamic barrier gradient descent into standard meta label correction.
While this naive extenstion is able to speed up the training process to approximately first-order complexity, it lacks mechanisms to prevent the leakage of noisy signals to the main model and to stabilize the learning of the meta model.
Based on this observation, we propose the EBOMLC method, which is designed with three key improvements including one-step inner loop update, mixture upper loss and alignment-aware dynamic barrier.
Empirical results on CIFAR-10 and CIFAR-100 demonstrate that EBOMLC consistently outperforms other baselines, especially under high noise rate settings, while reducing training time of the meta label correction approach.
\end{abstract}

\section{Introduction}
Deep learning models have become mainstream for image classification, given sufficient training data \cite{li2021survey}.
While the architectural design and training procedures of neural networks play a significant role, the quantity and quality of the training data are central to the success of deep models.
To reduce the high data annotation cost, crowdsourcing~\cite{yan2014} or online queries~\cite{blum2003} are commonly used, which can introduce label noise into the standard data-labeling pipeline.
Without clean data, the deep neural networks easily overfit to the incorrect labels, resulting in poor performance~\cite{arpit2017}.
Therefore, an alternative is to design learning mechanisms that  mitigate noisy labels in training data~\cite{song2022}.

Methods for filtering out noisy labels can be divided into two main branches, which are sample selection and label correction.
Sample selection approaches address the noise in data by assigning weights to training samples or filtering them out during training according to label quality \cite{ren2018, shu2019}.
Label correction approaches aim to detect and correct corrupted labels to their ground truth \cite{hendrycks2018, jiang2018}.
The newly corrected samples are then combined with the clean data and used to train the main model.
In \cite{zheng2021}, the Meta Label Correction (MLC) method proposes the use of a meta model for label correction.
MLC employs a {bilevel} optimization framework, composed of an {upper loss} and a {lower loss}, to train both the main model and the meta model.
The upper loss guides the meta model to learn from a small, clean dataset and the lower loss is used to train the main model on the large, noisy training set via multiple {inner loop} updates.
This formulation enables MLC to achieve notable improvements in performance without relying on assumptions about the underlying noise distribution.

While the bilevel optimization allows MLC to effectively extract the information embedded in noisy samples, it also introduces significant computational overhead, primarily due to the need for hypergradient computation.
To address this bottleneck, recent works have explored more efficient bilevel optimization strategies \cite{liu2023, sow2022}.
Within the branch of gradient-based optimization, the Bilevel Optimization Made Easy (BOME) framework~\cite{liu2022} is able to bypass the high computation cost of hypergradient computation by employing the dynamic barrier gradient descent technique.
However, due to the presence of corrupted labels, integrating dynamic barrier into the MLC framework faces certain difficulties.
First, the large gradient step of the value function update reduces optimization efficiency and injects additional noise into the main model's update direction.
Moreover, if the signal from the noisy dataset is unstable, the main model risks to overfit to clean dataset. 
Second, the meta model receives supervision solely from the noisy set, which substantially impairs its label correction capability.
These issues result in fragile training dynamics and degrade the main model's performance in high noise rate settings.

In this work, we first extend the original MLC framework using the dynamic barrier gradient descent to achieve the first-order complexity in bi-level optimization.
By analyzing the weaknesses of this naive extension, we propose \textbf{E}fficient \textbf{B}ilevel \textbf{O}ptimization for \textbf{M}eta \textbf{L}abel \textbf{C}orrection (EBOMLC), which improves both the training efficiency and the stability of meta-label correction in {noisy label} learning.
Compared with the standard MLC with dynamic barrier, EBOMLC is introduced with three improvements.
We employ a {one-step update mechanism} in the inner loop, which maximizes the efficiency of the {first-order} framework and suppresses the dominance of noisy gradients.
The second change is a {mixture upper loss} which combines the main and meta predictions to reduce overfitting to the clean data and enhances label correction capability.
The third and most important {improvement} of EBOMLC is an alignment-aware dynamic barrier, which preserves effective label correction in the meta model and balances the influence of clean and noisy objectives.
Give these updates, EBOMLC is able to achieve stationary of the upper level objective with respect to the main parameters under the standard smoothness and boundedness assumptions.
Extensive experiments on CIFAR-10 and CIFAR-100 demonstrate that EBOMLC achieves superior accuracy compared to existing baselines across noise rate settings.
By avoiding computing the hypergradient between the upper and lower loss, it significantly reduces training time relative to the original MLC.


\input{relatedwork}

\section{Background}
\subsection{Learning with Noisy Labels Problem}
Let $\mathcal{X}$ be the feature space and $\mathcal{Y} = \{0,1\}^K$ be the ground-truth label space in a {one-hot} encoding.
Given a set of clean data $D = \{x^c,y^c\}^M$ obtained from an unknown joint distribution over $\mathcal{X} \times \mathcal{Y}$ and a set of noisy data $D' = \{x',y'\}^{N}$ obtained from a noisy joint distribution $\mathcal{X} \times \mathcal{\tilde{Y}}$  with $M \ll N$.
The main model $f$, which we aim to train and use for prediction after training, is instantiated as a function with parameters $w$ and is trained to output the predicted label $y = f_w(x)$.
The learning objective is to optimize the parameters $w$ to minimize the expected risk $E_{(x^c,y^c)\in D}[\ell(f_w(x^c),y^c]$ for some loss function $\ell$.
Since we focus on the classification task, we set $\ell$ is to the cross-entropy loss.

\subsection{Meta Label Correction (MLC)}\label{AA}
The main idea of MLC is to train the main model on the noisy dataset $D' = \{x',y'\}^N$, and with a small clean dataset $D = \{x^c,y^c\}^M$.
Furthermore, it employs an additional model, the label correction network $g$ to predict the corrected labels for the samples in $D'$.
We parameterize $g$ as $g\_alpha$, where $\alpha$ denotes the meta parameters.
The network $g_\alpha$ is typically a small neural network that takes as 
input $h(x')$, the features of $x'$ extracted by the main model, and its noisy label $y'$.
The feature extraction model $h$ is usually chosen as a standard feature-extraction backbone such as Resnet \cite{resnet32} and is frozen during training.

The MLC objective is to optimize the parameters of the main model $f_w: \mathcal{X} \to \{0,1\}^K$ to minimize the empirical loss
\begin{equation*}
w^* = \arg\min_w \frac{1}{N} \sum_{i=1}^{N} \ell(f_w(x'_i), y'_i)
\end{equation*}
with $(x'_i, y'_i) \in D'$.
To filter out noise of $y'_i$, MLC trains $g_\alpha$ to predict $\tilde{y} = g_\alpha (h(x'),y')$.
The corrected pair $(x', \tilde{y})$ is used to train the main model $f_w$.
The label correction network $g_\alpha$ is usually referred to as the meta model.

As both $f_w$ and $g_\alpha$ are trained simultaneously, the learning objective is formulated as the following optimization problem Eq. \eqref{eq:objF}:
\begin{equation}
\min_\alpha F(w^*) \hspace{0.4cm} \text{ s.t.} \hspace{0.4cm} w^*=\arg \min_w G(w,\alpha)
\label{eq:objF}
\end{equation}
where:
\begin{equation}
 F(w^*) = \frac{1}{m} \sum_{(x^c,y^c)\in D} \ell(f_w(x^c), y^c)
\label{eq:standardF}
\end{equation}
\begin{equation}
     G(w,\alpha) = \frac{1}{m'} \sum_{(x',y')\in D'} \ell(f_w(x'), g_\alpha(h(x'),y'))
     \label{eq:standardG}
\end{equation}
To distinguish between the two functions, $F$ is referred as the {upper loss} function and $G$ is the {lower loss} function.
MLC \cite{zheng2021} further introduces a {$k$-step} SGD algorithm to find the optimal parameters in Eq. \eqref{eq:objF}.
In the $k$-step inner loop, the main model parameters $w$ are updated on the noisy training dataset, using the current value of $\alpha$.
This process represents the standard parameter training phase and returns $w^*(\alpha)$.
Once the inner loop finishes, the outer loop updates the meta parameters $\alpha$ by evaluating the hypergradient $\nabla_\alpha F(w^*(\alpha))$.
This update guides subsequent inner loop updates of the main parameters $w$ in future iterations.

There are other works which employ a similar optimization scheme.
For example, L2RW~\cite{ren2018} and MW-Net~\cite{shu2019} adopt the same bilevel formulation but instantiate the lower level as a weighted training loss, letting a clean validation objective govern the weights.
In comparison to MLC, both methods compute the required hypergradients by differentiating through a single, explicitly unrolled inner update at every iteration.
Meanwhile, MLC runs the inner loop optimization for multiple steps, followed by a separate outer update.
Nevertheless, all three methods requires an estimate of the hypergradient, which is a major drawback in terms of training efficiency.

\subsection{Bilevel Optimization for MLC}
Within the optimization framework of \eqref{eq:objF}, the meta model $g_\alpha$ produces soft labels, which is fully differentiable with respect to $\alpha$.
Therefore, it is straightforward to apply the bilevel optimization framework to optimize both the main parameter $w$ and the meta parameter $\alpha$ simultaneously.
However, to compute the hypergradient $\nabla_\alpha F(w^*(\alpha))$, it is required to evaluate the Jacobian-inverse Hessian-vector product with respect to the lower objective $G(w, \alpha)$.
Computing the Hessian-vector product is computationally demanding, as it scales with the number of meta parameters $\alpha$ and main model $w$.
MLC \cite{zheng2021} proposes using a simple meta model to avoid excessive computational overhead.
Other methods rely on the value function to transform the bilevel optimization into a single-level problem \cite{liu2023, sow2022}.
This formulation does not require evaluating the hypergradient and can find the optimal parameters via penalty function methods or other techniques.
In BOME, \citet{liu2022} propose to employ dynamic barrier gradient descent to solve the reformulated problem.

In noisy label learning, the upper objective $F$ corresponds to learning on the clean dataset, while the lower objective $G$ corresponds to learning on the noisy dataset.
Bilevel optimization methods often struggle to optimize these two objectives simultaneously without proper control, which can reduce the generalization ability of deep neural networks.
Therefore, it is more straightforward to place greater emphasis on optimizing the upper objective $F$ than the lower objective $G$.
This is the key insight which motivates our design to enhance MLC based on dynamic barrier gradient descent.

\section{Motivation}

Inspired by the works of \cite{sow2022, liu2022}, we employ the value function approach to learn the optimized main parameters.
Firstly, the formulation of MLC in Eq. \eqref{eq:standardF} is rewritten into an equivalent constrained optimization
\begin{equation}
 \min_{w,\alpha} F(w) \hspace{0.6cm} s.t \hspace{0.6cm} Q(w,\alpha) := G(w,\alpha) - G^*(\alpha) \leq 0
 \label{eq:MLCandBOME}
\end{equation}
where $G^*(\alpha)=\min_w G(w,\alpha)$ is the value function.
{The key advantage of this reformulation step is to avoid the computation of the hypergradient.}

To solve this formulation, we apply the dynamic barrier gradient descent method by updating the two parameters $w$ and $\alpha$ along an appropriate update direction $\lambda$.
The direction $\lambda$ ensures a decrease in the upper loss $F$, while simultaneously controlling the constraint in Eq. \eqref{eq:MLCandBOME}.
At time step $t$ of the outer loop, $\lambda_t$ is obtained by solving a constrained quadratic optimization problem
\begin{equation}
\lambda_t=\arg\min_\lambda\|\nabla F(w^{t}) - \lambda\|^2 \text{   s.t. } \nabla Q(w^{t},\alpha^{t})^T\lambda>\phi_t
\label{eq:lambdak1}
\end{equation}
where $\phi_t = \delta \|\nabla Q(w^{t},\alpha^{t})\|^2$ is used as a dynamic barrier.
In this paper, we denote $||.||$ as the standard Euclidean norm.
The scaling value $\delta > 0$ controls the inner product between $\nabla Q$ and $\lambda$, thereby restricting the update direction of $\nabla Q$.
The solution of Eq. \eqref{eq:lambdak1} can be derived using the Lagrange
multiplier method.

It is straightforward to adapt the dynamic barrier gradient descent to MLC.
As the upper objective $F$ depends only on $w$, we extend the standard gradient $\nabla F$ in the MLC framework to include the both directions of $w$ and $\alpha$ as $\nabla F = \big(\nabla_w F,\, \nabla_\alpha F\big)$.
By the definition in Eq. \eqref{eq:standardF}, it follows that $ \nabla F = \big(\nabla_w F,\, \mathbf{0}_\alpha\big)$, where $\textbf{0}_\alpha$ is the zero vector with dimensionality equal to the number of parameters in $\alpha$.
Using this notation, the optimal direction in Eq. \eqref{eq:lambdak1} can be written as
\begin{equation}
\lambda_t = (\nabla F(w^{t}),\textbf{0}_\alpha) + \beta_t \nabla Q(w^{t}, \alpha^{t})
\label{eq:lambdak}
\end{equation}
where the scaling coefficient $\beta_t$ is computed as
\begin{equation}
\beta_t  = \max \left( \frac{\phi_t -(\nabla F(w^{t}),\textbf{0}_\alpha)^T \nabla Q(w^{t},\alpha^{t})}{\|\nabla Q(w^{t},\alpha^{t})\|^2}, 0 \right) \label{MLCDbeta}
\end{equation}

In Algorithm \ref{algorithm1}, the details of the MLC with dynamic barrier gradient descent (MLC-D) are summarized, which is divided into two main steps:
\begin{itemize}
    \item \textit{Step 1 (inner loop)}: Approximate $w^*$ by $w'$ through $k$ steps of SGD on the main model with respect to the gradient $\nabla_wG(w,\alpha)$.
    \item \textit{Step 2 (outer loop)}: Compute $Q(w,\alpha)$ = $G(w,\alpha) - G(w^*,\alpha)$ and $\beta_t$, then update both the main parameters and meta parameters by:
\begin{align*}
w^{t+1}&=w^{t} - \eta_w(\nabla_w F(w)\big|_{w=w^{t}} + \beta_t \nabla_w Q(w,\alpha) \big|_{w=w^{t}}) \\
\alpha^{t+1}&=\alpha^{t} - \eta_\alpha(\textbf{0}_\alpha + \beta_t \nabla_\alpha Q(w,\alpha) \big|_{\alpha=\alpha^{t}}) 
\end{align*}
\end{itemize}
 
\begin{algorithm}[!t]
\caption{MLC-D}
\textbf{Input:} Number of inner loops $k$ and epochs $T$, initial parameters $w^{0}, \alpha^{0}$, learning rates $\eta_w, \eta_{\alpha}$.
\begin{algorithmic}[1]
\For{$ t=0, ..., T - 1$}
    \For{$i = 0, \dots, k-1$}
        \State Sample a batch $(x',y') \sim D'$
        \State Calculate $G(w^{t}_i,\alpha^{t})$ using Eq. \ref{eq:standardG}
        \State Update $w^{t}_{i+1}\gets w^{t}_i - \eta_w \nabla_w G(w,\alpha^{t})\big|_{w=w^{t}_{i}} $
    \EndFor
    \State Compute $ Q$ following Eq. \ref{eq:MLCandBOME}
    \State $ Q(w^{t},\alpha^{t}) =  G(w^{t}_{0}, \alpha^{t})-G(w^{t}_{k-1},\alpha^{t})$
    \State Sample a batch $(x^c,y^c) \sim D$
    \State Calculate $F(w^{t})$ using Eq. \ref{eq:standardF}
    \State $\beta_t \gets \max \left( \frac{\phi_t - (\nabla F(w^{t}),\textbf{0}_\alpha)^T \nabla 
    Q(w^{t},\alpha^{t})}{\|\nabla Q(w^{t},\alpha^{t})\|^2}, 0 \right)$
    \State $\nabla_w = \nabla_w F(w)\big|_{w=w^{t}} + \beta_t \nabla_w Q(w,\alpha) \big|_{w=w^{t}}$
    \State $w^{t+1} \gets w^{t} - \eta_w \nabla_w $
    \State $\nabla_\alpha =  \beta_t \nabla_\alpha Q(w,\alpha) \big|_{\alpha=\alpha^{t}}$
    \State $\alpha^{t+1} \gets \alpha^{t} - \eta_\alpha \nabla_\alpha$
\EndFor
\end{algorithmic}
\label{algorithm1}
\end{algorithm}

The dynamics of the inner and outer loops allow the main parameters $w$ and the meta parameters $\alpha$ to achieve stationary points, as shown in \cite{liu2022}.
However, when integrating the dynamic barrier gradient descent algorithm into the MLC framework, two primary challenges arise.
First, the main parameters $w$ in MLC-D are updated by
\begin{equation*}
  w^{t+1} = w^{t} - \eta_w \big( \nabla_w F + \beta_t \nabla_w Q \big),
\end{equation*}
which can be roughly decomposed into signals from the upper loss $\nabla_w F$ and the lower loss $\nabla_w Q$.
Because the clean dataset $D$ is much smaller than the noisy dataset $D'$, 
it is difficult to balance the effects of $\nabla_w F$ and $\beta_t \nabla_w Q$.
When the magnitude $\|\nabla Q\|$ is large, this lead to the scale parameter $\beta_t$ in Eq.~\eqref{MLCDbeta} decreasing to a minor value.
The updated direction of $w^{t}$ would thus depend on the objective loss $F$ over $D$.
Thus, the main model is likely to overfit the clean training samples.
In contrast, when $\beta_t$ is large, the term $\beta_t \nabla_w Q$ introduces noisy signals from the meta model into the updates of main parameters $w$.
As a result, MLC-D can become unstable when the two terms drift out of balance, across different noise levels and training stages.
Second, the meta model of MLC-D is optimized only using the gradients $\nabla Q$ from the noisy dataset.
Therefore, when the level of noise increases, the meta model’s capacity for effective label correction is substantially reduced, which is central to the success of the original MLC.

To further analyze these drawbacks of the MLC-D algorithm, we apply Algorithm \ref{algorithm1} to learning with noisy label setting on the CIFAR-10 dataset.
The noise level is chosen at 40\%{} uniform noise.
Detailed experiment settings are provided in Section \ref{datasets}.
The loss and accuracy of both MLC and MLC-D  over 120 epochs are plotted in Fig. \ref{fig:comparison}.
In comparison to MLC, the results show a significant drop in test accuracy of MLC-D, even though it achieves good convergence of the upper loss $F$.
This suggests that the signal from the upper loss $F$ causes the main model to overfit easily, which aligns with our above analysis.
Additionally, the upper loss $F$ undergoes large fluctuations during the early stage of training.
These jumps indicate the instability of the training dynamics when MLC-D is unable to balance the two update components $\nabla F$ and $\beta \nabla Q$. 
Figure \ref{fig:mlcdheatmap} illustrates the outputs of the meta model of both MLC and MLC-D after training.
In the orginal MLC, the meta model is updated via the hypergradient {$\nabla_\alpha F(w^*( \alpha))$}.
Although $F$ does not explicitly depend on $\alpha$, hypergradient computation still propagates supervision from the clean set to the meta parameters through $w^*(\alpha)$.
Meanwhile, MLC-D updates the meta model with the gradients from $\nabla_\alpha Q$ and does not rely on {$\nabla_\alpha F(w^* (\alpha))$}.
As a result, the meta model's label correction capability is substantially reduced.

\begin{figure}[htbp]
    \centering
    \includegraphics[width=\linewidth]{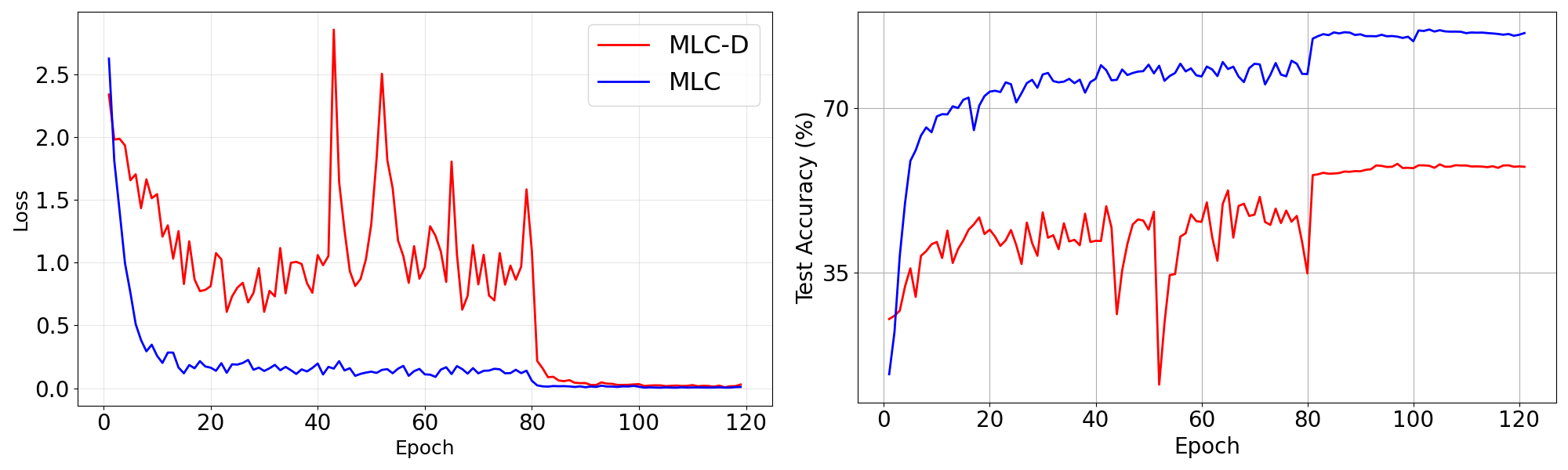}
    \caption{Training dynamics of MLC-D with 40\% uniform noise in CIFAR-10.}
    \label{fig:comparison}
\end{figure}

\begin{figure}[htbp]
    \centering
    \includegraphics[width=\linewidth]{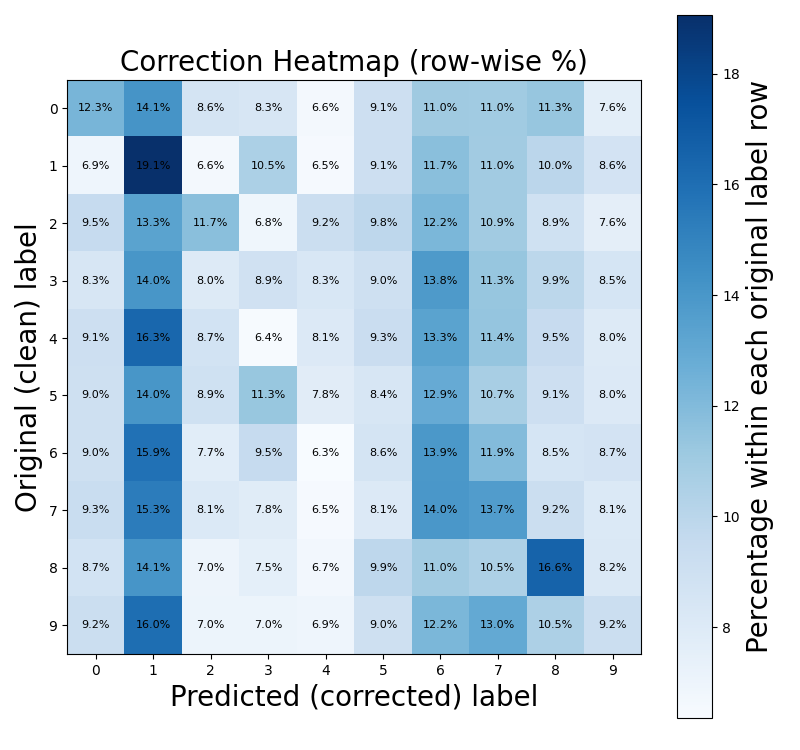}
    \caption{Heatmap of the meta model predictions of MLC-D versus clean labels with 40 \%{} uniform noise in CIFAR-10.}
    \label{fig:mlcdheatmap}
\end{figure}

\section{Method}
\subsection{Efficient Bilevel Optimization with Meta Label Correction}
Motivated by the above analysis, we propose the EBOMLC method to stabilize the training progress of  first-order {bilevel} framework under label noise.
As the step size of the inner loop can be affected by noisy samples, we employ a one-step update which both improves the efficiency and stability in training.
The upper objective is then revised and regularized using a mixture of the main model's and the meta model' output.
Finally, we update the dynamic barrier to include the misalignment direction in updating the main and the meta parameters.
These changes yield simple update rules with lower computational cost and more robustness under label noise.
We also provide theoretical analysis on the convergence of the main model under standard deep network training settings.

\subsubsection{One-step inner loop update}
In the standard BOME~\cite{liu2022} setting, the computation of $w^*$ in the inner loop ensures the convergence of the main parameters and the meta parameters.
However, this tends to make the main model focus on optimizing over the noisy dataset $D^{'}$, thereby reducing its prediction ability in general.
The over-optimization issue becomes more severe when the amount of noise increases.
To mitigate the negative effect of noisy signals from the inner loop, we reduce the computation of $w^*$ to only a single gradient step update on the value function $G$.

More specifically, given a fixed meta parameters $\alpha$, instead of the optimal point $G^*(\alpha)$ in Eq. \eqref{eq:MLCandBOME}, we alleviate the bottleneck in $k$-step look head gradient descent of $G$ by defining the new $\mathcal{Q}$:
\begin{equation}
 \mathcal{Q}(w,\alpha) = G(w,\alpha) - G(w_{(1)},\alpha)
 \label{eq:Qnew}
\end{equation}
Assuming a small learning rate update for the lower objective, it results in the following restriction on the magnitude
\begin{align*}
    \| \nabla G(w,\alpha) &- \nabla G(w_{(1)},\alpha) \| \leq\\
    \|  \nabla G(w&,\alpha) - \nabla G(w_{(T)},\alpha) \|
\end{align*}
Thus, the new $\| \nabla \mathcal{Q} \|$ is proportional to the standard $\| \nabla Q\|$.
Given arbitrary noise rates in the training data $D^{'}$, the new $\mathcal{Q}$ is expected to better prevent noisy signals from leaking into the main model.
Compared with \cite{liu2022}, our update mechanism is more general, efficient and stable than BOME in the present of noisy labels.
A main drawback of our proposed $\mathcal{Q}$ is that it cannot guarantee the optimality of the meta parameter $\alpha$. 
Nevertheless, as shown in the later Section \ref{theoryanalysis}, we still have the convergence property of the main parameters $w$, which is more essential in noisy label learning.

\subsubsection{Mixture upper loss}
Given the sparsity of data in the clean set $D$, it would causes the main model overfit easily if the model is trained directly by the gradient of upper loss $F$.
Additionally, updating the meta parameters solely from the noisy signal {$\nabla \mathcal{Q}$} degrades the capacity of the meta models for label correction.
To address these specific issues, we regularize the upper loss by introducing a new upper loss $\mathcal{F}$ as a convex mixture of the main and meta predictions:
\begin{equation}
\mathcal{F}(w, \alpha) = \frac{1}{m} \sum_{(x^c,y^c)\in D} \ell(\rho f_w(x^c) + (1-\rho)g_\alpha(h(x^c),y^c), y^c) \label{eq:newF}
\end{equation}
where $\rho$ is a constant satisfying $\rho \in (0,1)$, $f_w(x^c)$ and $g_\alpha(h(x^c),y^c)$ denote the outputs of the main model and the meta model, respectively, on the clean dataset.

In Eq. \eqref{eq:newF}, the upper loss of EBOMLC jointly supervises both the main and meta outputs toward the clean label $y_c$.
The term $g_\alpha(h(x^c),y^c)$ serves as a knowledge distillation signal, providing soft guidance that regularizes $f_w$ and mitigates overconfidence on the clean set $D$.
From an optimization perspective, our introduced $\mathcal{F}$ makes the update direction less sensitive to the main parameters $w$.
The meta model does not rely solely on the information provided by $\nabla_\alpha Q$, but is also influenced by the descent direction of $\nabla_\alpha \mathcal{F}$.
Consequently, it gains the ability to correct labels even for samples that already match the ground truth.
This helps prevent confusion in the meta model when the noisy dataset contains a mixture of clean and corrupted labels i.e. in the situation where $(x',y') \in D'$ and $y'$ matches the ground-truth label of $x'$.

The hyperparameter $\rho$ in Eq.  \eqref{eq:newF} controls how much the upper loss depends on the main model's outputs.
If $\rho$ is large, the upper loss is still dominated by the main model's prediction, and overfitting to the small clean set can persist. If $\rho$ too small, the gradient from the upper loss to the main parameters is severely attenuated ($\nabla_w \mathcal{F} \propto \rho $), effectively removing clean set supervision from the main model.


\subsubsection{Alignment-aware dynamic barrier}
From the illustrated results in Fig. \ref{fig:mlcdheatmap}, the main drawback of MLC-D comes from the noisy effects of $\| \nabla \mathcal{Q}\|$.
Moreover, the definition of the upper loss $\mathcal{F}$ implies a non-zero gradient of $\nabla_\alpha \mathcal{F}$.
Thus, this could potentially lead conflicting updates between $\nabla_\alpha \mathcal{F}$ and $\nabla_\alpha \mathcal{Q}$.
Under the setting of noisy label learning, $\nabla \mathcal{Q}$ represents the information obtained from learning with noisy labels, while $\nabla \mathcal{F}$ represents the information obtained from the clean samples.
When the conflict arises, i.e. $\nabla_\alpha \mathcal{F}^T\nabla_\alpha \mathcal{Q} <0$, the alignment between $\lambda$ and $\nabla \mathcal{Q}$ is likely to push the meta parameters $\alpha$ in a direction opposing $\nabla_\alpha \mathcal{F}$.
As a result, the conflict between $\nabla_\alpha \mathcal{F}$ and $\nabla_\alpha \mathcal{Q}$ would negatively affect the label correction capability of the meta model $g_\alpha$.

To enhance the meta network capacity, we propose a $\alpha$-gradient alignment dynamic barrier.
It incorporates both clean and noisy gradient signals from the meta network $g_\alpha$ as follows:
\begin{equation}
\bar\phi_t = \delta \| \nabla \mathcal{Q}(w^{t},\alpha^{t})\|^2 + \nabla_\alpha \mathcal{F}(w^{t},\alpha^{t})^T\nabla_\alpha \mathcal{Q}(w^{t},\alpha^{t})
\label{newPhi}
\end{equation}
where $\nabla_\alpha \mathcal{F}(w^{t},\alpha^{t})^T\nabla_\alpha \mathcal{Q}(w^{t},\alpha^{t})$ is referred to as the alignment term.
When the alignment term is positive, it allows $\lambda$ to be aligned with $\nabla \mathcal{Q}(w^{t},\alpha^{t})$ while still preserving the label correction capability of the meta model.
Otherwise, the constraint in Eq. \eqref{eq:lambdak1} is relaxed, allowing $\lambda$ to prioritize approaching $\nabla \mathcal{F}$ rather than forcing alignment with $\nabla \mathcal{Q}$ as in the previous barrier function.
Note that, the alignment term is specific to our proposed EBOMLC. 
In case of MLC-D, the upper loss depends only on the main model.
Therefore, the gradient $\nabla_\alpha \mathcal{F}$ is always zero and the update direction $\lambda$ in Eq. \eqref{eq:lambdak} is always aligned with $\nabla \mathcal{Q}$. 

With this modified barrier function, the optimization problem in Eq. \ref{eq:lambdak} yields a new solution as given by {the update direction}: 
\begin{equation*}
\bar\lambda_t = \nabla \mathcal{F}(w^{t}, \alpha^{t}) + \bar\beta_t \nabla \mathcal{Q}(w^{t}, \alpha^{t})
\end{equation*}
where 
\begin{equation}
\bar\beta_t  = \max \left( \delta -\frac{ \nabla_w \mathcal{F}(w^{t},\alpha^{t})^T \nabla_w \mathcal{Q}(w^{t},\alpha^{t})}{\|\nabla \mathcal{Q}(w^{t},\alpha^{t})\|^2}, 0 \right) \label{newbeta}
\end{equation}
When the upper loss $\mathcal{F}$ is used with $\rho=1$, the component $\nabla_\alpha \mathcal{F} = 0$.
Thus, the proposed barrier function reduces to the standard barrier in Eq. \eqref{eq:lambdak}.

The decrease in $\| \nabla \mathcal{Q} \|$ from a one-step update causes $\bar\beta$ to become large.
Furthermore, because the regularized upper loss reduces the influence of clean signal $\nabla \mathcal{F}$, it is necessary to control the impact of noisy signals $\nabla \mathcal{Q}$ in the update direction $\bar\lambda$.
Specifically, we scale $\bar\beta$ by a factor $\xi$, which plays a role similar to a learning rate of the noisy signal, directly reducing the influence of $\nabla \mathcal{Q}$ in the update:
\begin{equation}
\bar\lambda_t = \nabla \mathcal{F} + \underbrace{\xi \cdot \bar \beta_t}_{\text{reduced weight}}\nabla\mathcal{Q}
\label{newLambda}
\end{equation}
In the above equation, the constant $\xi$ is constrained to a value $\in (0,1)$.
to ensure that the vector $\xi\bar\beta_t \nabla \mathcal{Q}$ is a scaled-down version of $\bar\beta_t \nabla \mathcal{Q}$ while preserving its direction.

\subsection{Implementation Details}
Given the one-step update in Eq. \eqref{eq:Qnew}, the mixture upper loss $\mathcal{F}$ in Eq. \eqref{eq:newF} and the derivation of $\bar\lambda_t$ in Eq. \eqref{newLambda}, it is straightforward to train the main model and the meta model by following the MLC-D setup.
The update direction $\bar \lambda$ is decomposed as $\nabla_w$ and $\nabla_\alpha)$, aligned with $w$ and $\alpha$, respectively:
\begin{equation}
\begin{aligned}
\nabla_w = &\nabla_w \mathcal{F}(w,\alpha)\big|_{w=w^{t}} + \xi\bar\beta_t \nabla_w \mathcal{Q}(w,\alpha) \big|_{w=w^{t}}\\
w^{t+1} &\gets w^t - \eta_w \nabla_w 
    \label{eq:ourupdatew}
\end{aligned}
\end{equation}
\begin{equation}
\begin{aligned}
\nabla_\alpha =& \nabla_\alpha \mathcal{F}(w,\alpha)\big|_{\alpha=\alpha^{t}} + \xi\bar\beta_t \nabla_\alpha \mathcal{Q}(w,\alpha) \big|_{\alpha=\alpha^{t}}\\
\alpha^{t+1} &\gets \alpha^{t} - \eta_\alpha \nabla_\alpha
    \label{eq:ourupdatea}
\end{aligned}
\end{equation}
where $\eta_w$ is the the learning rate of the main model $f_w$ and $\eta_\alpha$ is the learning rate of the meta model $g_\alpha$.
Technically, we assume that the meta model is updated more slowly than the main model.
Therefore, the main model can adapt to the new labels from the meta model.
Otherwise, each update of the meta model would make the learning targets of the main model unstable, hurting the training dynamics of both models.
Under such conditions, we are able to provide a convergence guarantee for the main model in the next section.

Figure \ref{Figure 1} provides an illustration of our EBOMLC update mechanism in Eq. \ref{eq:ourupdatew} and Eq. \ref{eq:ourupdatea}.
The detailed implementation is given in Algorithm \ref{alg2}.
In practice, we choose $\xi=0.5, \rho=0.2$, and $\delta = 0.25$ to achieve the best empirical performance based on experimental results.

\begin{figure}[!t]
    \centering
    \includegraphics[width=0.5\textwidth]{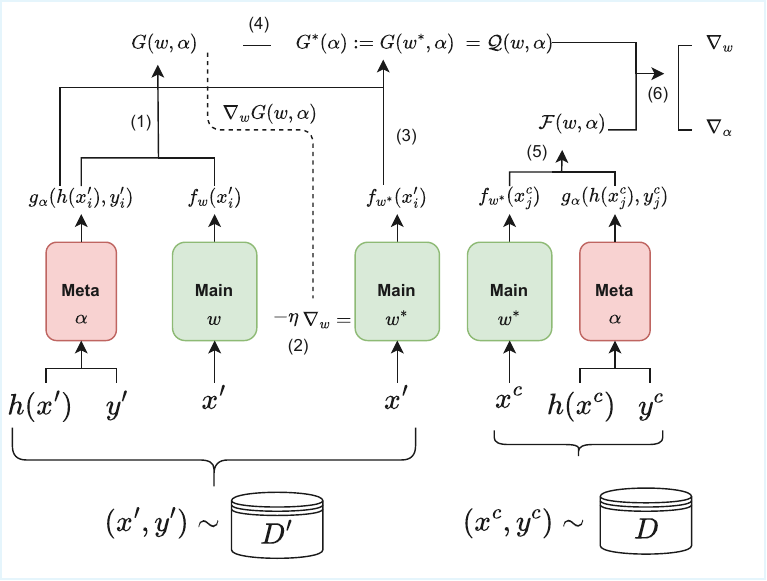}
    \caption{This figure illustrates the computational graph of our method. The procedure is as follows: (1) evaluating the loss between the main model’s prediction and the soft label corrected by the meta model on the noisy data \((x', y') \sim D'\); (2) computing \( w_{(1)} \) via a one-step update from the current \( w \) according to the gradient $\nabla_w G(w,\alpha)$; (3) feeding the feature \( x' \) to main model to compute \( G(w_{(1)},\alpha) \); (4) deriving \( \mathcal{Q}(w,\alpha) \) based on Eq. \eqref{eq:Qnew}; (5) calculating the loss between the main model's output and the label corrected by the meta model on the clean minibatch \((x^c, y^c) \sim D\); and finally, (6) computing $\bar\beta$ according to the Eq. \eqref{newbeta} to obtain $\nabla_w$, $\nabla_\alpha$, then updating both the main and meta model parameters.}
    \label{Figure 1}
\end{figure}

\algrenewcommand\algorithmiccomment[1]{\hfill$\triangleright$ #1}
\begin{algorithm}
\caption{EBOMLC}
\textbf{Input}: Clean and noisy datasets $D,D'$, number of training steps $T$, initial parameters $w^{0}, \alpha^{0}$, learning rates $\eta_w, \eta_{\alpha}$.\\
\textbf{Output}: Optimized parameters $w^{T}, \alpha^{T}$.
\begin{algorithmic}[1]
\For{$ t=0, ..., T - 1$}
    \State Sample a batch $(x',y') \sim D'$
    \State Calculate $G(w^{t},\alpha^{t})$ using Eq. \ref{eq:standardG} 
    \State Get $w^t_{(1)} \gets w^{t} - \eta_w \nabla_w G(w^t,\alpha)$ 
    \State $\mathcal{Q}(w^{t},\alpha^{t}) = G(w^{t}, \alpha^{t})-G(w^t_{(1)},\alpha^{t})$
    \State Sample a batch $(x^c,y^c) \sim D$
    \State Calculate $\mathcal{F}(w^{t},\alpha^{t})$ using Eq. \ref{eq:newF} 
    \State $\bar\beta_t \gets \max \left( \delta - \frac{  \nabla_w \mathcal{F}(w^{t},\alpha^{t})^T \nabla_w 
    \mathcal{Q}(w^{t},\alpha^{t})}{\|\nabla \mathcal{Q}(w^{t},\alpha^{t})\|^2}, 0 \right)$
    \State $\nabla_w = \nabla_w \mathcal{F}(w,\alpha)\big|_{w=w^{t}} + \xi\bar\beta_t \nabla_w \mathcal{Q}(w,\alpha) \big|_{w=w^{t}}$
    \State $w^{t+1}=w^{t} - \eta_w \nabla_w $
    \State  $\nabla_\alpha = \nabla_\alpha \mathcal{F}(w,\alpha)\big|_{\alpha=\alpha^{t}} + \xi\bar\beta_t \nabla_\alpha \mathcal{Q}(w,\alpha) \big|_{\alpha=\alpha^{t}}$
    \State $\alpha^{t+1}=\alpha^{t} - \eta_\alpha \nabla_\alpha$
\EndFor

\end{algorithmic}
\label{alg2}
\end{algorithm}




\subsection{Theoretical Analysis}
\label{theoryanalysis}

We further provide the convergence analysis for the upper loss $\mathcal{F}(w, \alpha)$ using the update rule by $\nabla_w$ in Eq. \eqref{eq:ourupdatew} and $\nabla_\alpha$ in Eq. \eqref{eq:ourupdatea}.
The settings of our analysis are built upon the smoothness and bounded assumptions as follows:

\begin{assumption}[Lipschitz Conditions on Gradients]\label{assumption1}
The upper loss functions $\mathcal{F}(w,\alpha)$ and lower loss $G(w,\alpha)$ are differentiable functions and 
their gradients $\nabla \mathcal{F}$ and $\nabla G$ are L-Lipschitz with respect to the combined input variables $(w,\alpha)$, for some constant $L>0$. 
\end{assumption}

\begin{assumption}[Bounding Gradients]\label{assumption2}
There exists a finite constant \( M \) such that for any \( (w, \alpha) \), the gradients of \( \mathcal{F} \) and \( G \), as well as the function values \( \mathcal{F}(w, \alpha) \) and \( G(w, \alpha) \), are all bounded by \( M \) in magnitude.
\end{assumption}

Under the above assumptions, we establish the sublinear convergence of the upper loss $\mathcal{F}$ with respect to the main parameters $w$, as stated in Theorem \ref{theorem1}.
\begin{theorem}\label{theorem1}
 Fix the total number of iterations \( T \). Let the learning rate for the meta parameters $\alpha$ be defined as 
$\eta_\alpha = \frac{c}{T}$ where \( c > 0 \) is a constant satisfying \( \frac{c}{T} < 1 \). Let the learning rate for the main model parameters $w$ be defined as $\eta_w = \frac{C}{\sqrt{T}}$,
where the constant \( C \) satisfies
\begin{equation*}
\dfrac{C}{\sqrt{T}} <\min( \dfrac{1-\xi}{L},1)
\end{equation*} 
Then, the following inequality holds:
\begin{equation*}
\min_{1 \leq t \leq T} \left\| \nabla_w \mathcal{F}(w^{(t)}, \alpha^{(t)}) \right\|^2 \leq \mathcal{O} \left( \frac{H}{\sqrt{T}} \right),
\end{equation*}
where \( H \) is a constant independent of the convergence process.
\end{theorem}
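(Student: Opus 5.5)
We propose a standard descent-lemma argument on $\mathcal{F}$, treating the $w$-update as a perturbed gradient step and the $\alpha$-update as a slowly varying drift whose contribution is controlled by the small step size $\eta_\alpha=c/T$.

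\textbf{Step 1 (descent inequality).} By Assumption~\ref{assumption1}, $\mathcal{F}$ is $L$-smooth in $(w,\alpha)$. We therefore expand
\begin{equation*}
\mathcal{F}(w^{t+1},\alpha^{t+1}) \le \mathcal{F}(w^{t},\alpha^{t}) - \eta_w \nabla_w\mathcal{F}^T\nabla_w - \eta_\alpha \nabla_\alpha\mathcal{F}^T\nabla_\alpha + \tfrac{L}{2}\big(\eta_w^2\|\nabla_w\|^2+\eta_\alpha^2\|\nabla_\alpha\|^2\big),
\end{equation*}
where $\nabla_w,\nabla_\alpha$ are the directions in Eq.~\eqref{eq:ourupdatew}--\eqref{eq:ourupdatea}.

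\textbf{Step 2 (the $w$-component).} This is the key step. Write $\nabla_w=\nabla_w\mathcal{F}+\xi\bar\beta_t\nabla_w\mathcal{Q}$. From the definition of $\bar\beta_t$ in Eq.~\eqref{newbeta}, whenever $\bar\beta_t>0$ we have $\bar\beta_t\|\nabla\mathcal{Q}\|^2 = \delta\|\nabla\mathcal{Q}\|^2-\nabla_w\mathcal{F}^T\nabla_w\mathcal{Q}$. Hence
\begin{equation*}
\nabla_w\mathcal{F}^T\nabla_w = \|\nabla_w\mathcal{F}\|^2 + \xi\bar\beta_t\nabla_w\mathcal{F}^T\nabla_w\mathcal{Q}.
\end{equation*}
The cross term is handled by cases on the sign of $\nabla_w\mathcal{F}^T\nabla_w\mathcal{Q}$.
\begin{itemize}
\item If it is nonnegative, the cross term only helps.
\item If it is negative, use $\bar\beta_t\|\nabla\mathcal{Q}\|^2\ge -\nabla_w\mathcal{F}^T\nabla_w\mathcal{Q}$ together with $\|\nabla_w\mathcal{Q}\|\le\|\nabla\mathcal{Q}\|$.
\end{itemize}
From these, we try to show $\xi\bar\beta_t|\nabla_w\mathcal{F}^T\nabla_w\mathcal{Q}|\le \xi\|\nabla_w\mathcal{F}\|^2 + (\text{bounded term})\cdot\xi\delta\|\nabla\mathcal{Q}\|^2$. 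This is where the factor $1-\xi$ enters, and it is consistent with the hypothesis $\eta_w<(1-\xi)/L$. The goal is
\begin{equation*}
-\eta_w\nabla_w\mathcal{F}^T\nabla_w + \tfrac{L}{2}\eta_w^2\|\nabla_w\|^2 \le -\tfrac{(1-\xi)}{2}\eta_w\|\nabla_w\mathcal{F}\|^2 + \eta_w K_1 + \eta_w^2 K_2 .
\end{equation*}

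\textbf{Main obstacle: bounding $\bar\beta_t\nabla\mathcal{Q}$.} The quantity $\bar\beta_t$ itself can blow up when $\|\nabla\mathcal{Q}\|$ is small. The product $\bar\beta_t\|\nabla\mathcal{Q}\|$, however, is bounded by $\delta\|\nabla\mathcal{Q}\|+\|\nabla_w\mathcal{F}\|\le \delta\|\nabla\mathcal{Q}\|+M$.

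It remains to bound $\|\nabla\mathcal{Q}\|$ and the $O(\eta_w)$ residual. Since $\mathcal{Q}(w,\alpha)=G(w,\alpha)-G(w-\eta_w\nabla_wG,\alpha)$, Lipschitzness of $\nabla G$ gives $\|\nabla\mathcal{Q}\|\le L\eta_w M(1+L\eta_w)=O(\eta_w)$. This is the crucial use of the one-step inner update: the whole noisy perturbation $\xi\bar\beta_t\nabla\mathcal{Q}$ is then of order $M+O(\eta_w)$. If the $O(\eta_w)$ bound turns out to give only a constant residual, the fallback is to absorb the residual into $K_1$ using $\|\nabla\mathcal{Q}\|=O(\eta_w)$, which makes $K_1=O(\eta_w)$.

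\textbf{Step 3 (the $\alpha$-component).} By Assumption~\ref{assumption2}, $\|\nabla_\alpha\|\le M+\xi\bar\beta_t\|\nabla\mathcal{Q}\|\le K_3$. The $\alpha$-terms therefore contribute at most $\eta_\alpha MK_3+\tfrac{L}{2}\eta_\alpha^2K_3^2=O(1/T)$ per step.

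\textbf{Step 4 (telescoping).} Sum over $t=1,\dots,T$. Use $|\mathcal{F}|\le M$ to bound $\mathcal{F}(w^1,\alpha^1)-\mathcal{F}(w^{T+1},\alpha^{T+1})\le 2M$. Dividing by $\tfrac{1-\xi}{2}\eta_wT$ gives
\begin{equation*}
\min_t\|\nabla_w\mathcal{F}\|^2\le \frac{2}{(1-\xi)}\Big(\frac{2M}{\eta_wT}+K_1'\eta_w+\frac{cK_3'}{\eta_wT}\Big).
\end{equation*}
With $\eta_w=C/\sqrt{T}$, each term is $O(1/\sqrt{T})$. Collecting the constants into $H$, which depends only on $M,L,\delta,\xi,c,C$, yields the claim.
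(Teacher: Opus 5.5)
Your proposal is correct and follows essentially the same route as the paper. The shared steps are a descent-lemma expansion, the bound $\bar\beta_t\|\nabla\mathcal{Q}\|\le\delta\|\nabla\mathcal{Q}\|+\|\nabla_w\mathcal{F}\|$ (the paper's Lemma~\ref{lemma2}), absorbing the cross term into $\xi\|\nabla_w\mathcal{F}\|^2$ plus a residual made $O(\eta_w^2)$ by the one-step bound $\|\nabla\mathcal{Q}\|=O(\eta_w)$, an $O(1/T)$ per-step $\alpha$-drift, and telescoping with $\eta_w=C/\sqrt{T}$. The paper splits the step into separate $\alpha$- and $w$-parts instead of your joint expansion, which is immaterial. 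One small correction: the residual that falls out directly is linear, $\xi\delta\|\nabla_w\mathcal{F}\|\,\|\nabla\mathcal{Q}\|\le\xi\delta M\|\nabla\mathcal{Q}\|$ (multiply your own bound on $\bar\beta_t\|\nabla\mathcal{Q}\|$ by $\|\nabla_w\mathcal{F}\|$), not the quadratic form you aimed for, so your ``fallback'' with $K_1=O(\eta_w)$ is exactly the paper's step.
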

The sublinear convergence rate is well expected under the bilevel optimization framework.
For example, in the case of the naive extension MLC-D, one can adapt the proofs in BOME~\cite{liu2022}
to obtain a similar convergence rate of the original upper objective $F$.
However, as the BOME framework seeks to have the optimality of both the upper loss and lower losses, it further requires an additional Polyak–Łojasiewicz condition on the lower loss function.
As discussed above, the convergence of meta parameters is likely harmful to the label correction of the meta model, especially when the noise rate in $D'$ is high.
Therefore, our analysis in EBOMLC takes a different route and focuses only on the convergence property of the upper loss function, which connects to the main parameters.


In noisy label learning, 
it is expected that the learning rate of the meta parameters $\alpha$ be slower  than for the main parameters. Building on this intuition, we further introduce Assumption \ref{assumption3} on the learning rate $\eta_\alpha$, given the learning rate $\eta_w$ of the main model $w$.
\begin{assumption} \label{assumption3}
Let \( \eta_w^{(t)} \) and \( \eta_\alpha^{(t)} \) denote the learning rates of the main model parameters $w$ and the meta model $\alpha$ parameters at iteration \( t \), respectively. Assume the meta learning rates $\eta_\alpha^{(t)}$ form a convergent series and the following conditions hold:
\begin{equation*}
\sum_{t=1}^{\infty} \eta_w^{(t)} = \infty, \quad
\sum_{t=1}^{\infty} \left( \eta_w^{(t)} \right)^2 < \infty.
\end{equation*}
\begin{equation*}
\eta_\alpha^{(t)} <\eta_w^{(t)}<\min( \dfrac{1-\xi}{L},1), \quad t=1,2,...
\end{equation*}
\end{assumption}

Given the above assumptions, the next theorem offers a stronger convergence guarantee for the main model.
\begin{theorem}{\label{theorem2}}
Then, the gradient of the upper objective with respect to the main model parameters $w$ converges to zero:
\[
\lim_{t \to \infty} \left\| \nabla_w \mathcal{F}(w^{(t)}, \alpha^{(t)}) \right\| = 0
\]
\end{theorem}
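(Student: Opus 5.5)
The plan is a stochastic-approximation argument in the style of Robbins--Monro (or Bertsekas--Tsitsiklis, ``gradient convergence in gradient methods''). The first step is to derive a per-iteration descent inequality for $\mathcal{F}$ along the update \eqref{eq:ourupdatew}--\eqref{eq:ourupdatea}. Assumption \ref{assumption1} gives $L$-smoothness of $\mathcal{F}$ in $(w,\alpha)$. Hence
\begin{align*}
\mathcal{F}(w^{t+1},\alpha^{t+1}) \le{}& \mathcal{F}(w^t,\alpha^t) - \eta_w \nabla_w\mathcal{F}^T\nabla_w - \eta_\alpha \nabla_\alpha\mathcal{F}^T\nabla_\alpha \\
&+ \tfrac{L}{2}\left(\eta_w^2\|\nabla_w\|^2+\eta_\alpha^2\|\nabla_\alpha\|^2\right).
\end{align*}

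\emph{Bounding the $w$-block.} Write $\nabla_w=\nabla_w\mathcal{F}+\xi\bar\beta_t\nabla_w\mathcal{Q}$. By the definition \eqref{newbeta}, when $\bar\beta_t>0$ we have $\bar\beta_t\|\nabla\mathcal{Q}\|^2=\delta\|\nabla\mathcal{Q}\|^2-\nabla_w\mathcal{F}^T\nabla_w\mathcal{Q}$. This yields
\[
\xi\bar\beta_t\nabla_w\mathcal{F}^T\nabla_w\mathcal{Q}\ \ge\ -\xi\|\nabla_w\mathcal{F}\|\,\|\nabla_w\mathcal{Q}\|\cdot(\text{bounded}).
\]
The aim is the cleaner bound $\nabla_w\mathcal{F}^T\nabla_w\ge(1-\xi)\|\nabla_w\mathcal{F}\|^2-\xi K$. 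To get it, bound the product $\bar\beta_t\,\nabla_w\mathcal{F}^T\nabla_w\mathcal{Q}$ using $\bar\beta_t\le\delta+\|\nabla_w\mathcal{F}\|/\|\nabla\mathcal{Q}\|$, which gives a contribution of order $-\xi\|\nabla_w\mathcal{F}\|^2$ plus a term $\delta\xi M'$. Here $\nabla\mathcal{Q}$ is bounded by $2M$, since $\mathcal{Q}=G(w,\alpha)-G(w-\eta_w\nabla_wG,\alpha)$ and Assumptions \ref{assumption1}--\ref{assumption2} apply. More carefully, $\|\nabla\mathcal{Q}\|\le L\eta_w M(1+L)$ because $\mathcal{Q}$ is a difference of $G$ at points $\eta_w M$ apart. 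This is the key point: the noisy term is $O(\eta_w)$, so its cross term with $\nabla_w\mathcal{F}$ is $O(\eta_w^2)$ after multiplication by the step size. The condition $\eta_w<(1-\xi)/L$ then lets the quadratic term be absorbed, leaving a net coefficient of order $\eta_w(1-\xi)/2$ on $\|\nabla_w\mathcal{F}\|^2$.

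\emph{Bounding the $\alpha$-block and summing.} Boundedness of all gradients and of $\bar\xi\bar\beta_t\|\nabla\mathcal{Q}\|$ (which is at most $\delta\|\nabla\mathcal{Q}\|+\|\nabla_w\mathcal{F}\|$) gives $|\eta_\alpha\nabla_\alpha\mathcal{F}^T\nabla_\alpha|\le\eta_\alpha K_1$. The result is
\[
\tfrac{1-\xi}{2}\,\eta_w^{(t)}\|\nabla_w\mathcal{F}^t\|^2 \le \mathcal{F}^t-\mathcal{F}^{t+1}+K_1\eta_\alpha^{(t)}+K_2(\eta_w^{(t)})^2 .
\]
Telescoping and using $|\mathcal{F}|\le M$, the summability of $\eta_\alpha^{(t)}$, and $\sum(\eta_w^{(t)})^2<\infty$ gives $\sum_t\eta_w^{(t)}\|\nabla_w\mathcal{F}^t\|^2<\infty$. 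Combined with $\sum\eta_w^{(t)}=\infty$, this yields $\liminf_t\|\nabla_w\mathcal{F}^t\|=0$.

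\emph{Upgrading $\liminf$ to $\lim$ (the main obstacle).} I would use the standard upcrossing argument. Suppose $\|\nabla_w\mathcal{F}^t\|>2\epsilon$ infinitely often. Then there are infinitely many disjoint intervals $[s_j,e_j)$ on which the norm passes from below $\epsilon$ to above $2\epsilon$ while staying above $\epsilon$. By $L$-Lipschitzness of $\nabla\mathcal{F}$ and bounded updates,
\[
\epsilon\le\|\nabla_w\mathcal{F}^{e_j}-\nabla_w\mathcal{F}^{s_j}\|\le L\sum_{t=s_j}^{e_j-1}\bigl(\eta_w^{(t)}\|\nabla_w\|+\eta_\alpha^{(t)}\|\nabla_\alpha\|\bigr)\le LK_3\sum_{t=s_j}^{e_j-1}\eta_w^{(t)}.
\]
This uses $\eta_\alpha<\eta_w$ from Assumption \ref{assumption3}. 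On each interval we therefore have $\sum\eta_w^{(t)}\|\nabla_w\mathcal{F}^t\|^2\ge\epsilon^2\cdot\epsilon/(LK_3)$. Summing over infinitely many intervals contradicts the finiteness established above, so $\lim_t\|\nabla_w\mathcal{F}^t\|=0$.

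The delicate point is the first step. One must make sure the $\bar\beta_t$ cross term is genuinely controlled, via $\|\nabla\mathcal{Q}\|=O(\eta_w)$ or via the $(1-\xi)$ margin, rather than only bounded by a constant that would be multiplied by a non-summable $\eta_w$.
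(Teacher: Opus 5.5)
Your proposal is correct and follows essentially the paper's route. Like you, the paper gets a descent inequality from $L$-smoothness and controls the $\bar\beta_t$ cross term by the bound $\bar\beta_t\|\nabla_w\mathcal{F}\|\|\nabla_w\mathcal{Q}\|\le\|\nabla_w\mathcal{F}\|^2+\delta M\|\nabla_w\mathcal{Q}\|$, absorbing the first piece into the $(1-\xi)$ margin and bounding the second by $\|\nabla_w\mathcal{Q}\|\le\eta_w LM$; it then telescopes using summability of $\eta_\alpha^{(t)}$ and $(\eta_w^{(t)})^2$, and upgrades $\liminf$ to $\lim$ through increments bounded by $K\eta_w^{(t)}$ (via $\eta_\alpha^{(t)}<\eta_w^{(t)}$). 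The differences are cosmetic: the paper splits each step into an $\alpha$-move and a $w$-move rather than using the joint descent lemma, and it packages the last step as Lemma 3 applied to $b_t=\|\nabla_w\mathcal{F}(w^t,\alpha^t)\|^2$ rather than your inline upcrossing on the norm. In that upcrossing, drop the left endpoint $s_j$ from the lower bound, e.g.\ using $\eta_w^{(t)}\to 0$.
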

Specifically, Theorem \ref{theorem2} ensures that the main model parameters $w$ converge to a local minimizer associated with a terminal value of the meta parameters $\alpha$.
It can be achieved with a meta learning rate schedule whose series is summable, resulting in soft label correction effect of the meta model $g_\alpha$.
As pointed out by \citet{zheng2021}, soft labels produced by the meta model can provide richer supervision for the main model.
Therefore, EBOMLC deliberately sacrifices convergence of the meta model to a better-performing main model.
In practical settings, the convergence of our modified $\mathcal{F}$ to the optimized $w^*$ is expected to be associated with the convergence of the main model $f$ on the clean set.
We provide a detailed analysis to this behavior in Section \ref{analysis} by varying the mixture coefficient $\rho$.



\begin{table*}[htbp]
    \centering
    \begin{tabular}{l|ccccc|cc|c|c}
        \toprule
        & \multicolumn{7}{c|}{CIFAR10} & \textbf{Avg.} & \textbf{Training} \\
        \textbf{Method}
        & \textbf{uniform-0\%} & \textbf{uniform-20\%} & \textbf{uniform-40\%} & \textbf{uniform-60\%} & \textbf{uniform-80\%}
        & \textbf{flip-20\%} & \textbf{flip-40\%}
        & \textbf{Rank} & \textbf{Time} \\
        \midrule
        Co-teaching+ & 89.14 & 85.79 & 82.98 & 56.37 & 44.85 & 77.56 & 72.93 & 7.43 & $0.51 \times$ \\
        DivideMix    & 91.32 & 87.15 & 85.16 & \underline{82.45} & \textbf{77.69} & 82.90 & 78.38 & 4.43 & $1.12 \times$ \\
        GLC          & 92.25 & \underline{89.31} & 86.20 & 74.27 & 64.38 & \textbf{90.97} & \underline{89.75} & \underline{3.00} & $0.33 \times$ \\
        L2RW         & 85.16 & 82.51 & 78.01 & 72.19 & 55.81 & 84.49 & 80.86 & 6.57 & $1.69 \times$ \\
        MW-Net       & 88.91 & 80.61 & 71.55 & 64.02 & 48.36 & 87.11 & 84.75 & 6.86 & $1.89 \times$ \\
        PMW-Net      & 90.74 & 88.94 & \underline{86.80} & 81.56 & 45.30 & 88.79 & 87.36 & 4.29 & $1.95 \times$ \\
        MLC          & \underline{92.76} & 89.01 & 86.08 & 82.08 & 74.10 & 89.39 & 87.12 & 3.14 & $2.28 \times$ \\
        MLC-D        & 67.75 & 66.84 & 67.56 & 67.14 & 67.58 & 66.18 & 62.57 & 8.00 & $2.32 \times$ \\
        EBOMLC       & \textbf{92.92} & \textbf{92.18} & \textbf{89.58} & \textbf{83.51} & \underline{76.02} & \underline{90.74} & \textbf{90.54} & \textbf{1.29} & $1.00 \times$ \\
        \bottomrule
    \end{tabular}
    \vspace{0.5em}
    \caption{Comparison of test accuracy on \textsc{CIFAR10} under uniform and flip noise. The best and second-best results are marked in \textbf{bold} and \underline{underline}. The \textbf{Avg. Rank} column denotes the average ranking across seven noise settings (lower is better).}
    \label{cifar10}
\end{table*}

\begin{table*}[htbp]
    \centering
    \begin{tabular}{l|ccccc|cc|c|c}
        \toprule
        & \multicolumn{7}{c|}{CIFAR100} & \textbf{Avg.} & \textbf{Training} \\
        \textbf{Method}
        & \textbf{uniform-0\%} & \textbf{uniform-20\%} & \textbf{uniform-40\%} & \textbf{uniform-60\%} & \textbf{uniform-80\%}
        & \textbf{flip-20\%} & \textbf{flip-40\%}
        & \textbf{Rank} & \textbf{Time} \\
        \midrule
        Co-teaching+ & 56.28 & 50.49 & 46.80 & 34.96 &  9.14 & 49.15 & 36.69 & 7.57 & $0.51 \times$ \\
        DivideMix    & 66.34 & 61.92 & 56.14 & \underline{48.59} & 11.18 & 54.14 & 48.12 & 4.43 & $1.12 \times$ \\
        GLC          & \textbf{68.28} & \underline{63.48} & \underline{56.82} & 46.80 & \underline{23.27} & \textbf{66.64} & 47.59 & \underline{2.57} & $0.33 \times$ \\
        L2RW         & 58.84 & 52.90 & 45.19 & 37.71 & 18.96 & 53.57 & 48.59 & 5.71 & $1.69 \times$ \\
        MW-Net       & 63.91 & 55.07 & 52.55 & 45.17 & 18.87 & 58.99 & 45.41 & 5.14 & $1.89 \times$ \\
        PMW-Net      & 66.98 & \textbf{63.52} & 56.12 & 47.12 & 20.53 & 56.12 & 47.73 & 3.43 & $1.95 \times$ \\
        MLC          & 65.03 & 48.73 & 39.06 & 25.26 & 15.58 & 58.98 & \underline{57.46} & 5.86 & $2.28 \times$ \\
        MLC-D        & 17.02 & 13.26 & 15.98 & 14.91 & 12.06 & 15.47 & 15.58 & 8.71 & $2.32 \times$ \\
        EBOMLC       & \underline{67.13} & 62.92 & \textbf{58.15} & \textbf{50.04} & \textbf{35.41} & \underline{64.86} & \textbf{58.11} & \textbf{1.57} & $1.00 \times$ \\
        \bottomrule
    \end{tabular}
    \vspace{0.5em}
    \caption{Comparison of test accuracy on \textsc{CIFAR100} under uniform and flip noise. The best and second-best results are marked in \textbf{bold} and \underline{underline}. The \textbf{Avg. Rank} column denotes the average ranking across seven noise settings (lower is better).}
    \label{cifar100}
\end{table*}

\section{Experiments}
\subsection{Settings}
Under the bilevel optimization framework for noisy label learning, we design our experiments to follow the standard settings of MLC~\cite{zheng2021}.
Moreover, to highlight the advantages of our proposed EBOMLC, we use the default settings for the noise distribution, network architecture and training loss.
The detailed of these settings are given as follows.
\subsubsection{Datasets}
\label{datasets}
We evaluate our proposed EBOMLC and other baselines on the two image datasets: CIFAR-10 and CIFAR-100.
Each dataset consists of 50,000 training images and 10,000 test images. 
There are 10 classes in CIFAR-10 and 100 classes in CIFAR-100.
Following the prior work ~\cite{shu2019,zheng2021}, we use $2 \%{}$ of the training images as the clean data set $D$, and the remaining $98 \%{}$ as the noisy dataset $D'$.
The number of training samples per class are distributed equally in both the clean and noisy training dataset.

The noisy training dataset $D'{}$ is created by generating incorrect labels following either a symmetric or an asymmetric distribution.
In the symmetric (uniform) noise setting, the true label is randomly flipped uniformly to another class, regardless of the true label. 
We select the percentage of the noisy samples in uniform noise in the list of of $[0 \%{}, 20\%{}, 40\%{}, 60\%{}, 80\%{}]$ of the total samples in $D'{}$.
In the asymmetric (flip) noise setting, the true label are flipped to a specific incorrect class with higher probability, often mimicking real-world confusion or systematic labeling errors.
We only test with the percentage values of $[20\%{}, 40\%{}]$ in the flip noise setting. We report the mean performance over 3 random seeds.

\subsubsection{Models and Baselines}
Given the training setup with a small clean dataset $D$  and a large noisy dataset $D'{}$, we compare our proposed EBOMLC with standard MLC \cite{zheng2021} and its variant MLC-D from Algorithm \ref{algorithm1}. 
We also include the six baselines, which relate to the bilevel optimization, including:
\begin{itemize}
    \item \textbf{Co-teaching+}~\cite{yu2019}: Co-teaching+ trains two networks simultaneously and lets them exchange small-loss examples, further reducing the risk of memorizing noisy labels.
    \item \textbf{DivideMix}~\cite{dividemix2020}: DivideMix treats learning with noisy labels as a semi-supervised problem by splitting data into clean/noisy subsets via a Gaussian mixture model on the losses. It remains one of the strongest and most widely used baselines for noisy-label learning.
    \item \textbf{GLC}~\cite{hendrycks2018}: GLC employs a noise transition matrix to correct the noisy data based on the clean training dataset.
    \item \textbf{L2RW}~\cite{ren2018}: L2RW introduces online training weights to train samples based on feedback from a clean set.
    \item \textbf{MW-Net}~\cite{shu2019}: MW-Net further expands the reweighting approach of L2RW by learning an explicit nonlinear mapping to automatically assign weights to training samples.
    \item \textbf{PMW-Net}~\cite{zhao2021}: PMW-Net further generalizes MW-Net by adopting a probabilistic formulation of the weighting function, which improves the robustness and stability of meta reweighting under label noise.
\end{itemize}
{Note that, the baselines are selected to work with both low and high noise rate settings, and without the requirement of a strong underlying representation learning model. Moreover, some methods employ different backbone architectures~\cite{wang2021, xu2025dulc} or self-pretrained strategies~\cite{emlc2023, dmlp2023}, making it challenging to disentangle whether the performance improvements originate from the proposed methods or from the backbone networks.}

For a fair comparison, we used the same small main model architecture, Resnet-32 \cite{resnet32}, for both the CIFAR-10 and CIFAR-100 datasets, similar to studies using the bilevel framework~\cite{shu2019,zhao2021, zheng2021}.
The details implementation of the three methods MLC, MLC-D and EBOMLC involves a further meta model.
In our implementation, the backbone $h$ is a frozen ResNet-32 feature extractor, and we use an embedding layer of size $(C, 128)$ to embed the input noisy labels where $C$ denotes the number of classes of the corresponding dataset.
The meta model $g_\alpha$ takes as input the example's feature $h$ together with the noisy label embedding and outputs a categorical distribution over classes as the corrected labels.
We employ a three-layer feedforward network for $g_\alpha$.
In MLC-D, we apply $k=5$ steps for the inner loop to extract the gradient $\nabla Q$ from the lower objective $G$.

All models are trained with the standard cross entropy loss function.
The total number of training epochs is 120
We use the SGD method with the momentum set to 0.9, weight decay set to $5e-4$ for the main model. 
The main model is trained with a learning rate of 0.1, which is decayed by a factor of 0.1 at epochs 80 and 100.
We use a smaller learning rate $3e-4$ for the meta model with Adam optimizer.
There is no additional regularization techniques or data augmentation methods in the training setup.
All of the experiments are run on T4 GPU for a fair comparision of computing speed.
Our EBOMLC codebase will be made available online upon the acceptance of the manuscript.

\begin{figure*}[!t]
    \centering
    \begin{subfigure}[b]{0.48\textwidth}
        \includegraphics[width=\linewidth]{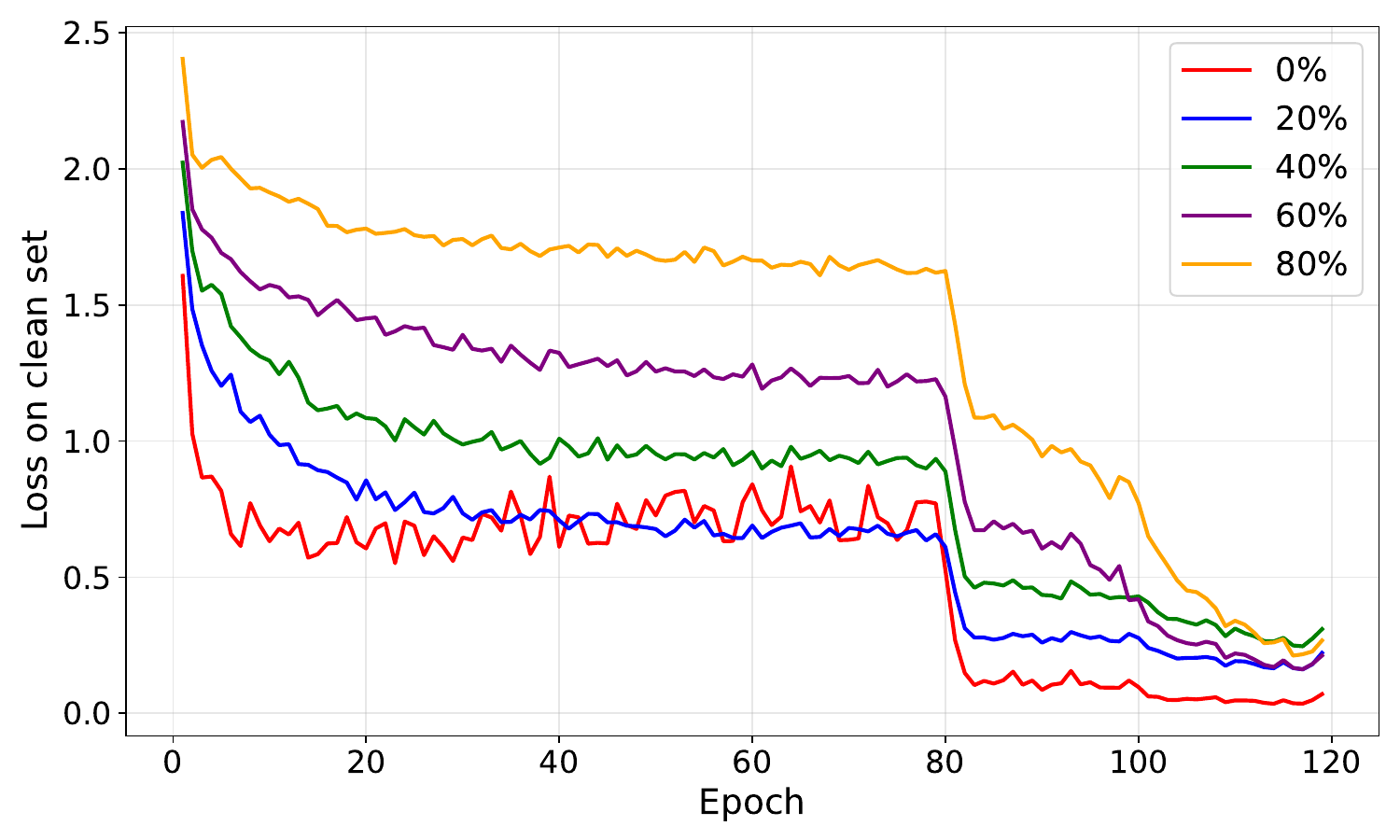}
        \caption{Loss of main model to the clean dataset in CIFAR-10}  
        \label{fig3a}
    \end{subfigure}
    \hfill
    \begin{subfigure}[b]{0.48\textwidth}
        \includegraphics[width=\linewidth]{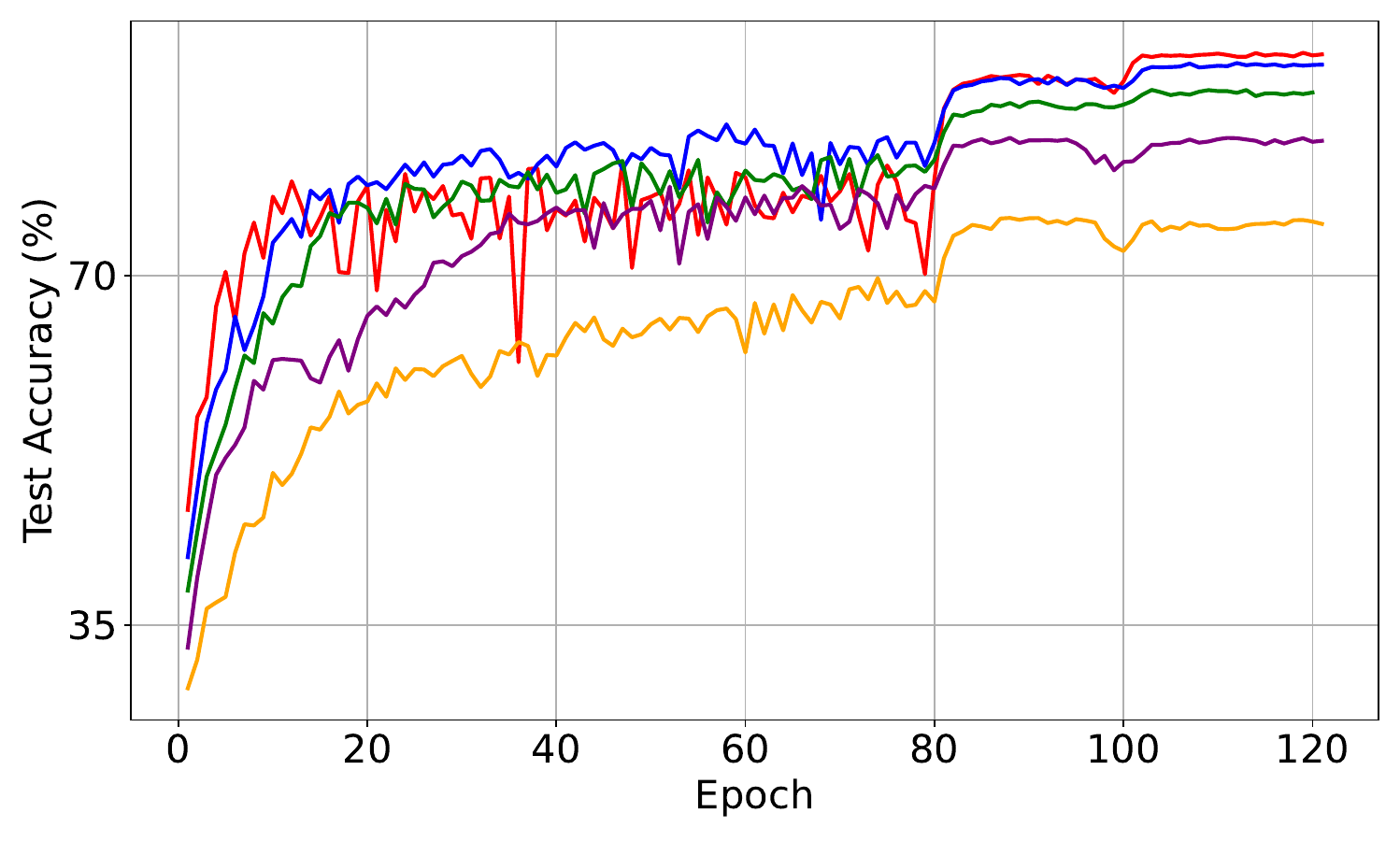}
        \caption{Test accuracy of the main model in CIFAR-10}
    \end{subfigure}
    \hfill
    \begin{subfigure}[b]{0.48\textwidth}
        \includegraphics[width=\linewidth]{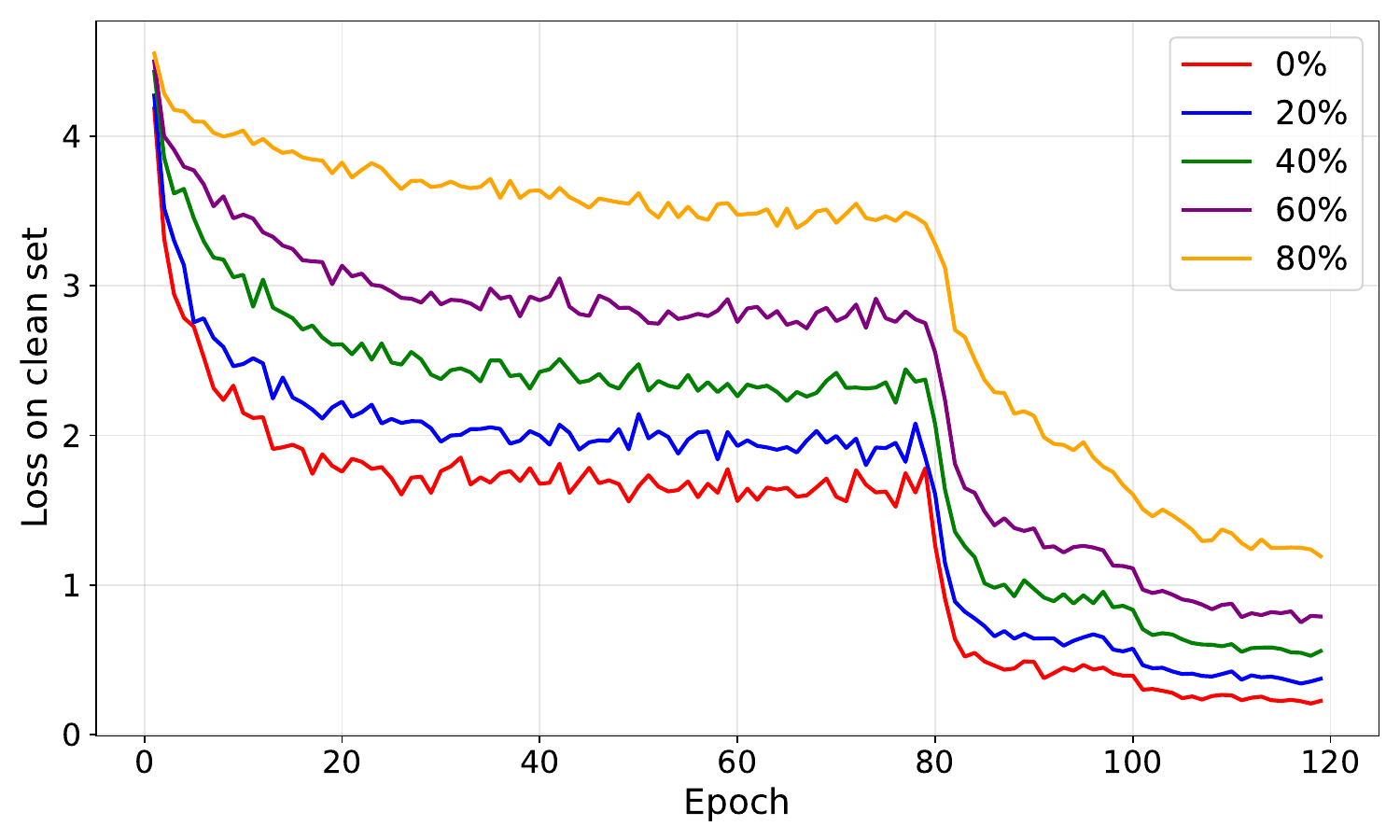}
        \caption{Loss of main model to the clean dataset in CIFAR-100}
        \label{fig3c}
    \end{subfigure}
    \hfill
    \begin{subfigure}[b]{0.48\textwidth}
        \includegraphics[width=\linewidth]{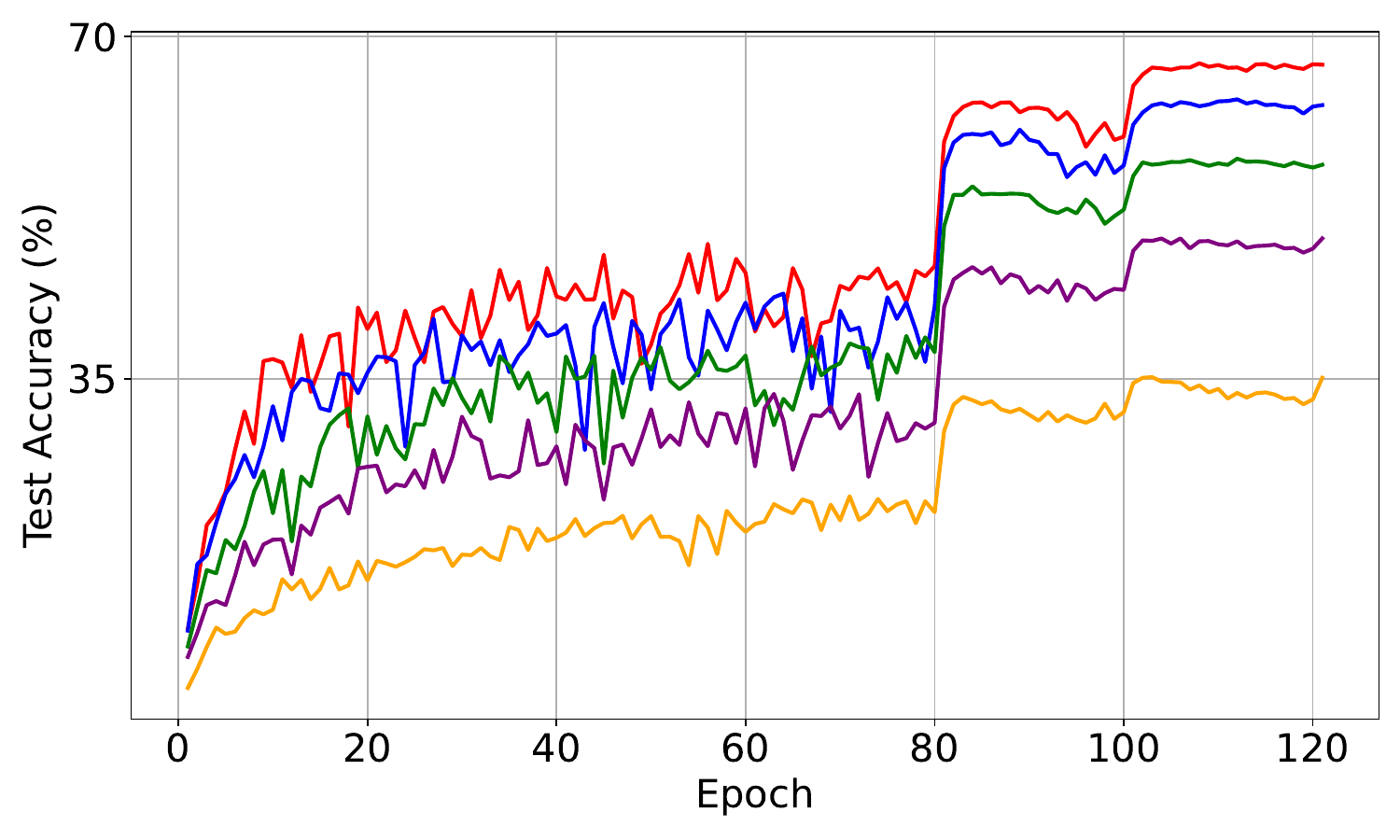}
        \caption{Test accuracy of the main model in CIFAR-100}
    \end{subfigure}
    \caption{Performance of main model on clean dataset and test set under uniform noise ratio on CIFAR-10 and CIFAR-100.}
    \label{losschart}
\end{figure*}

\subsection{Main Results}
The results of our proposed EBOMLC and other baselines are reported in the Table \ref{cifar10} and Table \ref{cifar100} for the CIFAR-10 and CIFAR-100 datasets, respectively.
In Table \ref{cifar10}, our proposed EBOMLC outperfoms all other baselines in most settings.
The GLC method has a slight advantage over EBOMLC in the 20\% flip noise setting.
However, we signifcantly improve accuracy in the high noise region of uniform noise over GLC on CIFAR-10 dataset. This behavior is captured by the Avg. Rank metric.
EBOMLC achieves the lowest Avg. Rank on CIFAR-10.

In Table \ref{cifar100}, similar patterns can be observed on the CIFAR-100 dataset.
Our method outperforms the other baselines, except at the 0\% and 20\% noise rates where GLC is better.
The minor advantage of GLC method over our proposed EBOMLC occurs at low noise rates where its heuristic is able to estimate the class-conditional transition matrix accurately.
As the noise rate increases, the transition matrix becomes harder to denoise, which reduces the effectiveness of GLC's corrections. GLC still achieves the second-best Avg. Rank. However, its rank becomes weaker in the highest-noise settings.
In contrast, EBOMLC does not rely on an explicit noise model, making it more robust when the noise is heavy or less structured, which aligns with the observed gains over GLC in those settings.
For training time comparison, only GLC and Co-teaching+ are more efficient than EBOMLC.
The main reason is that these methods do not depend on the bilevel formulation.
For example, GLC estimates the label noise transition matrix from the clean subset and then trains the main model on the denoised labels directly.
Under the bilevel optimization setting, EBOMLC, which achieves near first-order complexity, significantly improves the training speed of MLC and MLC-D.

To validate the regularization effects of EBOMLC, we further analyse the performance of the main model and its accuracy on the test set in Fig. \ref{losschart}.
We plot the upper loss of the main model $f_w$ to the clean dataset $D$ over training in Fig. \ref{fig3a} and \ref{fig3c}, computed using Eq. \eqref{eq:standardF}.
Note that, our EBOMLC use the proposed $\mathcal{F}$ for training the main model $f_w$ instead of using standard objective function $F$ directly.
As both Theorem \ref{theorem1} and Theorem \ref{theorem2} guarantee the convergence of the proposed upper loss, the empirical results further indicate that the behavior of the main model on the clean dataset follows the same pattern.
By incorporating the distillation mechanism to $\mathcal{F}$, the main model has better regularization properties, which then translates into higher test accuracy;
In the hyperparameter settings, we set the meta model learning rate to be much smaller than that of the main model.
Additionally, the learning rate decay mechanism at epochs 80 and 100 constrains the updates of the meta model $g_\alpha$, resulting in soft label correction.
These choices align with our conditions in Assumption \ref{assumption3}.

Figure \ref{fig:losscompare} compares the proposed EBOMLC against MLC-D to highlight the effectiveness of our method, using the definitions of upper loss in Eq.~\eqref{eq:standardF} and the lower loss in Eq.~\eqref{eq:standardG}.
It is clear that MLC-D heavily prioritizes optimizing the upper loss over the lower loss.
However, due to the small number of samples in the clean dataset $D$, this optimization does not translate into a better test accuracy.
At 40\%{} uniform noise, the lower loss of MLC-D fluctuates substantially, which indicates a low accuracy of the meta model on the noisy dataset $D^{'}$.
This phenomenon matches the plotted heatmap of MLC-D in Fig. \ref{fig:mlcdheatmap}.
In comparison to MLC-D, the main model of EBOMLC maintains a good balance between upper and lower losses throughout training, which then transfer to a better test accuracy.
This is a direct consequence of our regularized objective loss $\mathcal{F}$ which reduces the main model's sensitivity to the limited amount of data in clean subset.
Moreover, the introduced coefficient $\xi$ attenuates the influence of noisy signals, stabilizing the training of main model when learning from both clean and corrupted supervision.
For training speed comparison, MLC-D requires $k=5$ steps to optimize in the inner loop, resulting in a higher computational cost than our EBOMLC.
The training time in Fig. \ref{fig:losscompare} highlights that our EBOMLC is trained at more than twice speed of MLC-D under the same setting, consistent with the reports in Table~\ref{cifar10} and \ref{cifar100}.


Figure~\ref{fig:comparisonheatmap} analyzes the label correction capability of the meta model between the original MLC method and our proposed approach, which is a key factor determining the effectiveness of the label correction mechanism.
In Fig. \eqref{fig5:subfig1}, although the label correction heatmap of MLC is significant better than the introduced MLC-D in Fig.~\ref{fig:mlcdheatmap}, it still struggles to reach at least 60\%{} accuracy under the 40\%{} noise rate in $D'$. 
This weakness stems from the fact that MLC lacks a mechanism enabling the meta model $g_\alpha$ to identify examples whose labels are already correct in noisy set $D'$.
In EBOMLC, by incorporating the gradient $\nabla_\alpha \mathcal{F}$ to learn, our meta model can accurately correct such instances.
Moreover, the proposed dynamic barrier keeps the meta parameters $\alpha$ aligned with the direction of $\nabla_\alpha \mathcal{F}$, thereby improving the meta model's label correction capability.  
This highlights the advantage of EBOMLC over MLC and MLC-D in the high and complicated noise rate settings, as shown more clearly in the Table \ref{cifar100}.


\begin{figure}[H]
    \centering
    \includegraphics[width=0.5\textwidth, height=0.18\textwidth]{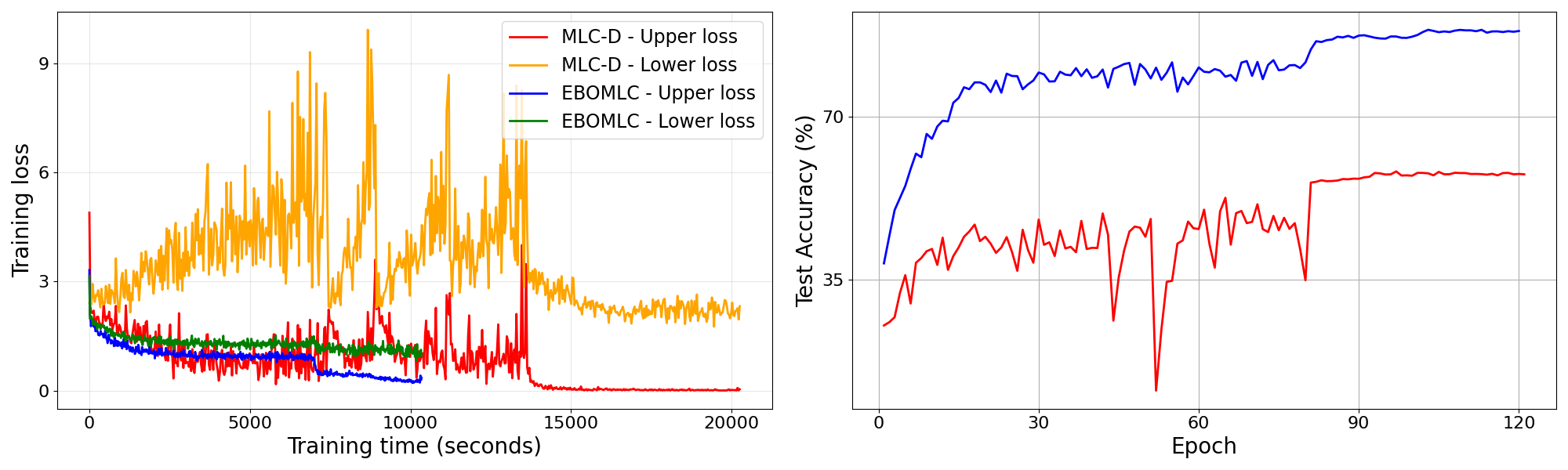}
    \caption{In the left subfigure, the loss curves of EBOMLC and MLC-D on the CIFAR-10 dataset with 40\% uniform noise are plotted. In the right figure, the test accuracy of EBOMLC (blue color) and MLC-D (red color) are plotted.}
    \label{fig:losscompare}
\end{figure}

\subsection{Analysis}
\label{analysis}
\subsubsection{The effect of one step gradient accumulation in updating main model}
\begin{figure}[htbp]
    \centering
    \begin{subfigure}[b]{0.95\columnwidth}
        \centering
        \includegraphics[width=\linewidth]{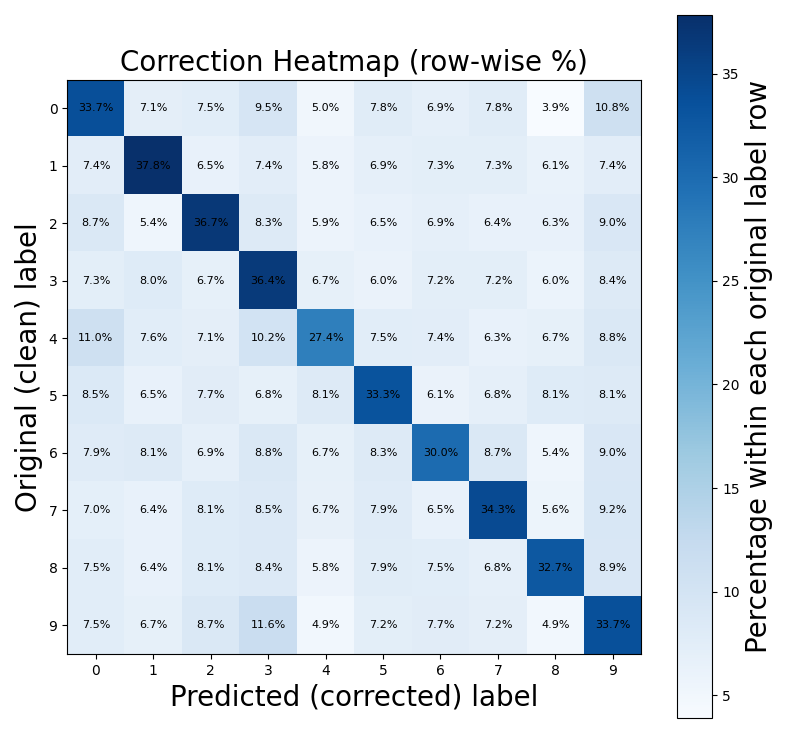}
        \caption{The label correction performance of MLC}
        \label{fig5:subfig1}
    \end{subfigure}
    \hfill
    \begin{subfigure}[b]{0.95\columnwidth}
        \centering
        \includegraphics[width=\linewidth]{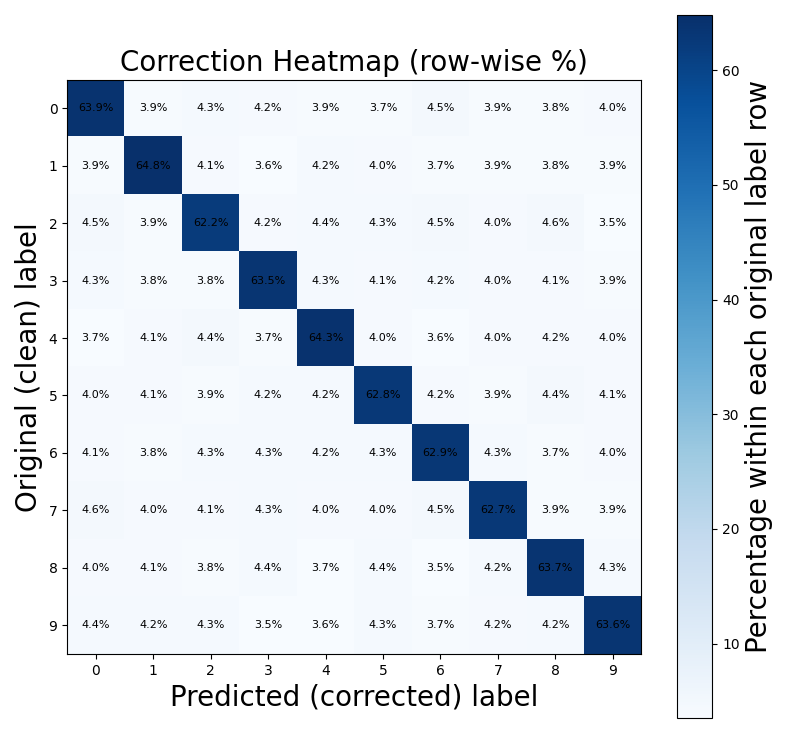}
        \caption{The label correction performance of EBOMLC}
        \label{fig5:subfig2}
    \end{subfigure}
    \caption{Heatmap between predictions of the meta model and the ground-truth labels of the CIFAR-10 training data under 40\% uniform noise after training.}
    \label{fig:comparisonheatmap}
\end{figure}
While MLC-D estimates the optimized $w^*$ using an approximation with $k=5$ inner loop steps, our proposed EBOMLC chooses to use a one-step update from the lower loss instead of computing the optimal parameters $w^*$ in Eq.~\eqref{eq:Qnew}.
The main reason is that a small step of the inner-loop helps prevent noise leakage from noisy labels while preserving the convergence properties of the main parameters $w$.
In this analysis, we empirically show that a one-step inner update (\(k=1\)) is preferable both computationally and statistically.

Given the implementation details in Algorithm \ref{alg2}, it is straightforward to extend EBOMLC to more than one inner loop step.
Figure~\ref{fig:innerloop1} plots the training progress of $k=2$ and $k=5$ compared with $k=1$, and Fig.~\ref{fig:innerloop2} plots the accuracy of the main model on the CIFAR-10 dataset.
We vary the noisy rate in 20\% (low), 40\% (medium), and 80\% (high), using a uniform noise distribution.
All models are trained with 120 epochs, and training time is reported to compare efficiency of across different values of $k$.

The results in Fig.~\ref{fig:innerloop} highlight that EBOMLC with $k=1$ demonstrates the smoothest training dynamics and achieves the highest test accuracy.
When the number of steps $k$ increases, the loss becomes more unstable and test performance decreases.
This aligns with our above analysis that a large $\nabla \mathcal{Q}$ magnitude would introduce more noisy signal into the main model.
At a 80\%{} noise rate, both $k=2$ and $k=5$ have a significant drop in test accuracy compared with the standard setting $k=1$.
Moreover, the default value $k=1$ achieves the best training efficiency, which is $1.5\times$ and $2\times$ faster than $k=2$ and $k=5$, respectively.

\begin{figure}[htbp]
    \centering
    \begin{subfigure}[b]{0.9\columnwidth}
        \centering
        \includegraphics[width=\linewidth]{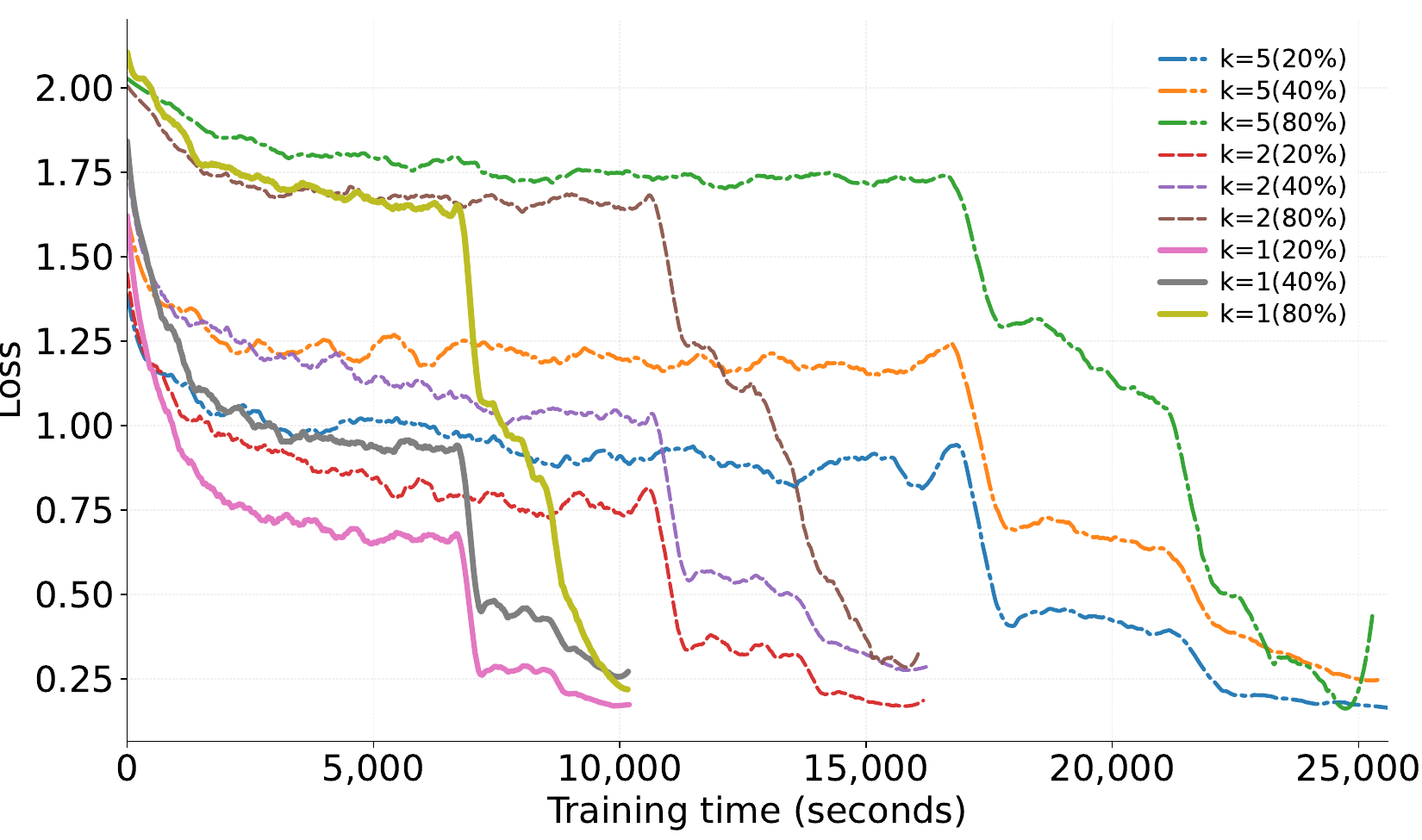}
        \caption{Loss on clean set}
        \label{fig:innerloop1}
    \end{subfigure}
    \hfill
    \begin{subfigure}[b]{0.9\columnwidth}
        \centering
        \includegraphics[width=\linewidth]{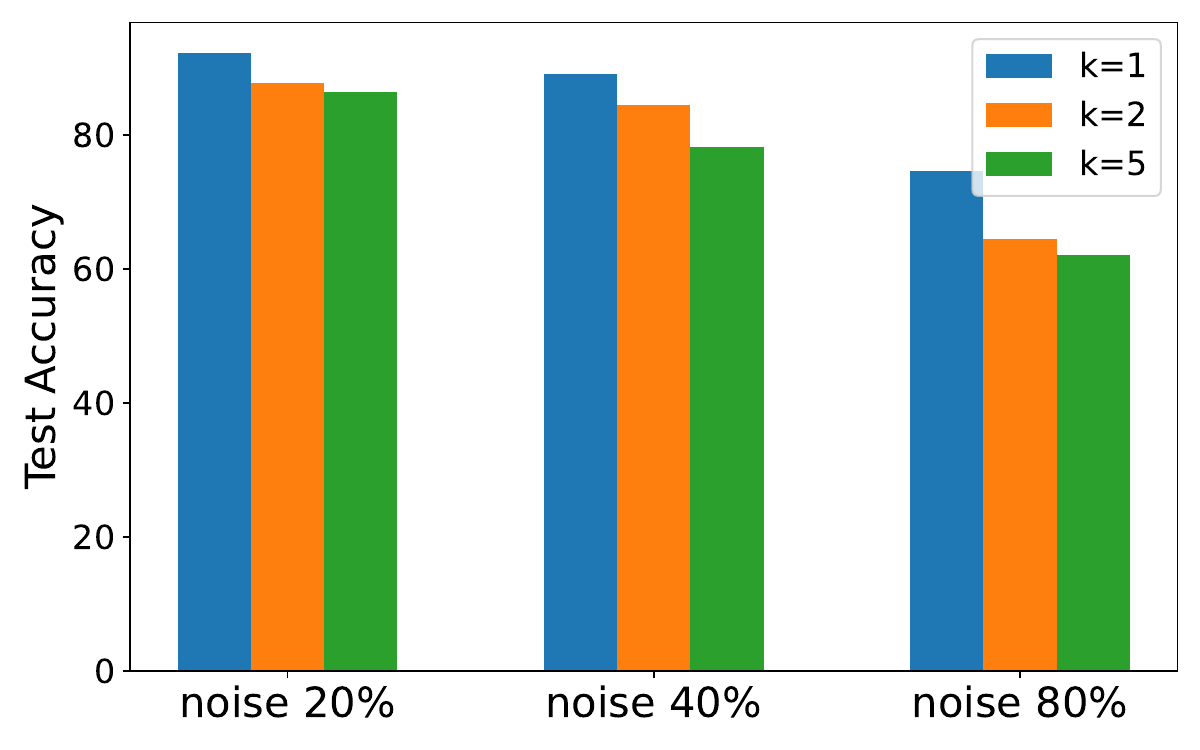}
        \caption{The accuracy on test set}
        \label{fig:innerloop2}
    \end{subfigure}
    \caption{Comparison of EBOMLC using different values of $k$ in $20\%{}$, $40\%{}$ and $40\%{}$ uniform noise.}
    \label{fig:innerloop}
\end{figure}

\begin{figure}[http]
    \centering
    \includegraphics[width=0.5\textwidth, height=0.18\textwidth]{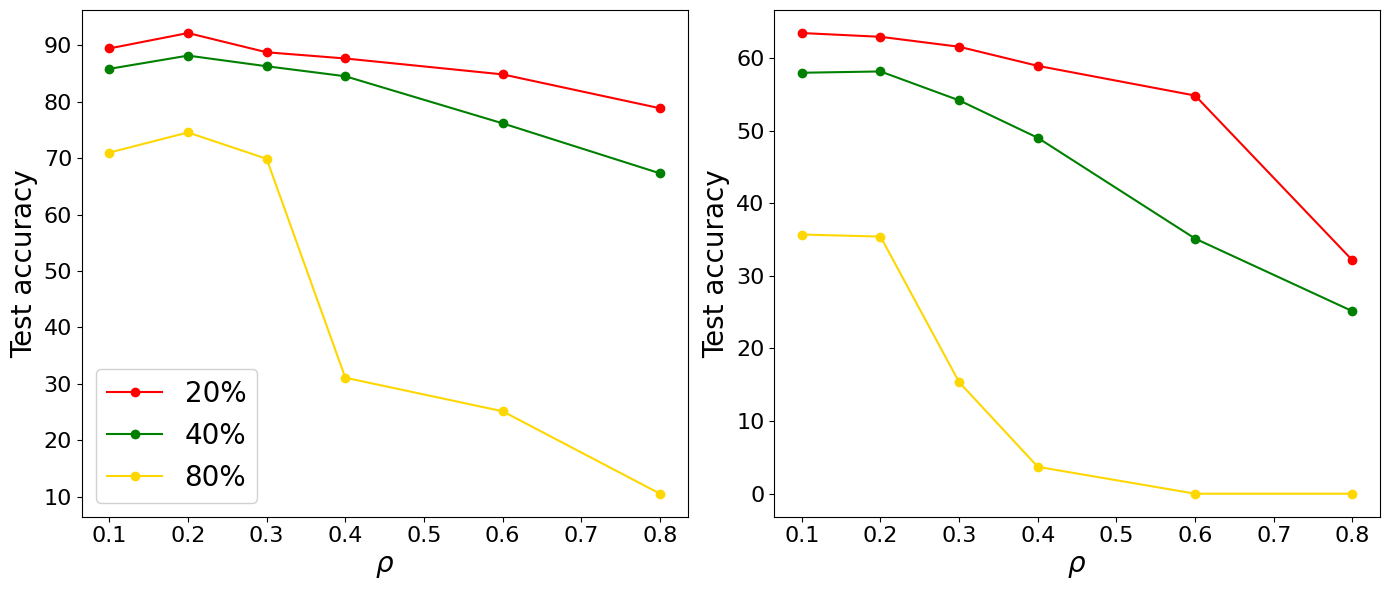}
    \caption{Test accuracy w.r.t $\rho$ values on CIFAR-10 (left) and CIFAR-100 (right).}
    \label{fig:rho}
\end{figure}


\subsubsection{Analysis of the mixture upper loss $\mathcal{F}$}
The proposed objective function $\mathcal{F}$ in EBOMLC prevents the main model from overfitting on the clean dataset by relying on information from the meta model.
The degree of knowledge distillation is controlled by the hyperparameter $\rho$.
In Fig. \ref{fig:rho}, we plot the test accuracy for different value of $\rho$ in the range of $[0.1, 0.8]$ on both CIFAR-10 and CIFAR-100 dataset.
The noise rate is selected from 20\%{}, 40\%{}, and 80\%{} with uniform setting.

We observe that EBOMLC achieves the best performance on test accuracy with a low value of $\rho$.
When the value of $\rho$ increases, the objective function $\mathcal{F}$ becomes dominated by the loss on the clean dataset.
From a theoretical aspect, both Theorem \ref{theorem1} and \ref{theorem2} still hold for any value of $\rho \in (0,1)$, implying that EBOMLC is able to converge on the small, clean training dataset to some extent.
However, it starts to overfit the clean samples with large values of $\rho$, which is similar to the behavior of MLC-D.
At an 80\%{} noise rate, the regularization effect of $\mathcal{F}$ becomes critical.
The performance of EBOMLC with $\rho=0.8$ even collapses to a very low accuracy on the test set.
Meanwhile, with $\rho=0.1$ or $\rho=0.2$, EBOMLC still maintains stable performance on both testing datasets.

Based on our experiments, a setting of $\rho = 0.2$ performs robustly across noise types and rates.
This suggests that the optimal $\rho$ is governed by the ratio of clean to noisy samples in the training set rather than an engineering choice.
In this work, we restrict our experiments to a fixed ratio to focus on comparing EBOMLC with prior methods.
A more general approach is to search for the optimal value of the hyperparameter by deriving a principled estimator for $\rho$ given the clean and noisy training dataset.
\begin{figure}[H]
    \centering
    \includegraphics[width=0.5\textwidth, height=0.18\textwidth]{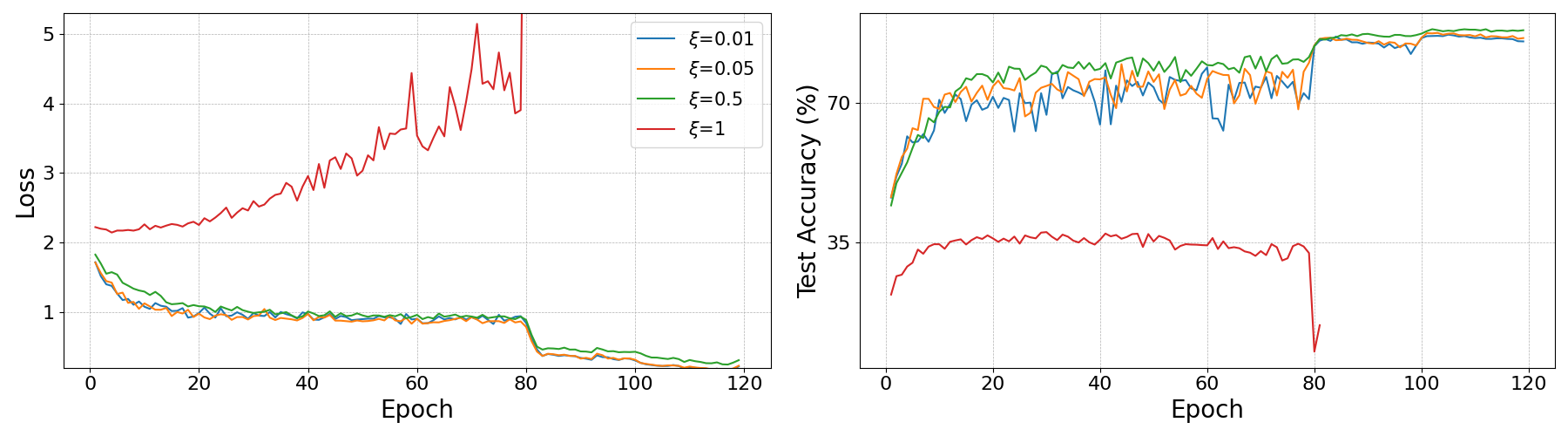}
    \caption{In the left subfigure, the loss on clean dataset w.r.t $\xi$ with 40\% uniform noise are plotted. In the right figure, the test accuracy of EBOMLC with different choise of $\xi$ are plotted.}
    \label{fig:mchart}
\end{figure}

\subsubsection{Analysis of the training dynamics with the scaling constant $\xi$}
In EBOMLC, the scaling constant $\xi$ is introduced to scale the noisy gradient $\|\nabla \mathcal{Q}\|$ in the update direction.
Figure \ref{fig:mchart} illustrates the training dynamics of EBOMLC with $\xi \in \{0.01, 0.05, 0.5, 1\}$ when training on CIFAR-10 dataset with 40\% uniform noise.
When $\xi$ is set to 1, the main model fails to converge with respect to the loss on the clean set after epoch 80 and the test accuracy starts to drop.
This issue arises when the update direction for $w$ in Eq. \eqref{eq:ourupdatew} is dominated by $\bar\beta_t \nabla_w \mathcal{Q}$ instead of $\nabla_w \mathcal{F}$.
As training proceeds, the magnitude of $\| \nabla_w \mathcal{Q}\|$ shrinks toward zero while $\bar\beta_t $ increases, creating a positive-feedback loop. 
As a result, the noisy term amplifies and leaks noise into the main model, leading to unstable training dynamics.
At the other extreme, when $\xi$ is too small, $\nabla_w$ aligns mostly with $\nabla_w \mathcal{F}$.
Thus, it results in tighter convergence of the loss on the clean set but under-utilizes the noisy set, which could potentially reduce generalization to test data.
In the right plot of Fig.~\ref{fig:mchart}, the test accuracy of EBOMLC becomes more fluctuation when $\xi$ decreases.
Therefore, choosing an appropriate value of $\xi$ is essential in practice.
Based on our experiments, a reasonable range of $\xi$ is $[0.1, 0.5]$.
Within this interval, the main model achieves consistent learning dynamics and has robust performance across different noise rates and settings.
\subsection{Limitations}
{Although EBOMLC shows strong results, several limitations remain.
First, EBOMLC leverages a clean subset to define the upper-level objective, which makes the meta signal reliable but also introduces an extra requirement.
In many real scenarios, such trusted data may be unavailable or expensive to collect, and even when it exists, the subset can be very small.
A promising direction is to reduce this dependence by constructing a trusted set automatically from noisy data.
Another direction is to couple EBOMLC with semi-supervised learning, so unlabeled or weakly-labeled samples can strengthen the upper signal when clean labels are scarce.}

{Second, EBOMLC still requires several key hyperparameters, and training quality depends on them.
The scaling constant $\xi$ affects stability of the bilevel update. Beside that, EBOMLC introduces the mixture weight $\rho$ also matters because it may shift across datasets.
In the current setting, these values are tuned manually, which can reduce practicality when conditions change.
Future work can design adaptive schedules for $\xi$ and $\rho$ based on training dynamics, such as validation trends or gradient statistics.}

\section{Conclusion}
In this paper, we introduce EBOMLC, a novel and efficient framework for learning with noisy labels.
By applying the dynamic barrier gradient descent algorithm, we improve the efficiency of the MLC framework, achieving the first-order complexity approximately.
However, applying this scheme naively makes the training dynamic become unstable and more sensitive to noisy signals.
Therefore, our proposed EBOMLC incorporates three key improvements, including a one-step inner loop update, a mixture upper loss, and an alignment-aware dynamic barrier.
EBOMLC significantly reduces computational overhead and stabilizes the training process while enhancing label correction capabilities.
Experiments on CIFAR-10 and CIFAR-100 demonstrate that our method outperforms existing baselines, especially under high-noise conditions, both in accuracy and efficiency.
Given the promise of dynamic barrier gradient descent in noisy label learning, we aim to extend EBOMLC both theoretical and experimental directions.
One possible direction is to analyze the effects of a large learning rate in the meta model to the main model, which is not considered in our setting.
Additionally, the design of the mixture upper loss can be extended with further regularization techniques to make our EBOMLC capable of handling more general settings with varying noise distributions between the clean and noisy sets.
\input{appendix}

\vspace{12pt}
\bibliographystyle{unsrtnat}
\bibliography{references}

\end{document}

%% file: relatedwork.tex

\section{Related Works}
From an optimization strategy perspective, current methods for learning with noisy labels are roughly divided into two categories: sample selection and label correction.
This taxonomy is commonly adopted in recent surveys and benchmark reviews of learning with noisy labels \cite{song2022,algan2021survey}.

Sample selection approaches \cite{jiang2018, han2018} focus on detecting noisy samples and reducing their influence during training.
Several strategies for the selection mechanism are proposed such as co-teaching with multiple network \cite{han2018, yu2019}, curriculum and disagreement-based training \cite{jiang2018mentornet,malach2017decoupling}, or weight-based approaches \cite{ren2018, shu2019}.
{DivideMix \cite{dividemix2020} combines loss-based splitting with semi-supervised learning (SSL) to exploit the uncertain subset}.
SplitNet \cite{kim2025splitnet} extends this direction by making the splitting step learnable. {However, it adds an extra learnable splitting network, increasing training complexity, so improvements can be partly due to the added module rather than the core framework.}

In the branch of label correction, classical approaches rely on estimating the noise distribution by a confusion matrix between noisy labels and correct labels \cite{hendrycks2018, patrini2017}.
The label denoising process can also be derived from the model’s prediction output through a self-correction mechanism \cite{wang2021}, or by mixing observed labels with model predictions (bootstrapping) \cite{reed2015bootstrapping}.
Robust objectives and regularization are also commonly used to reduce memorization under noise, e.g., robust loss designs \cite{zhang2018gce,wang2019symmetric} and early-learning regularization \cite{liu2020elr}.
While these methods achieve stable performance in the presence of noise, they are mostly designed based on metaheuristics.
For example, estimating the noise-transition matrix requires strong assumptions about the underlying label noise distribution.
In the GLC approach \cite{hendrycks2018}, the author employs a conditional independent assumption between the true label and the noisy label, given the input sample, to learn the noise correction matrix.
ProSelfLC \cite{wang2021} assumes a connection between the model’s prediction entropy and the predicted labels in noisy data.
{DULC \cite{xu2025dulc} introduces an interpolation-based label correction to avoid false correction.}
However, these approaches are more sensitive to noise in the early training phase, especially in a high noise rate setting.

The MLC framework is proposed in \cite{zheng2021}, which links the main model training process to an auxiliary label-correction model via a bilevel optimization setup.
MLC then updates the label correction model using a $k$-step look-ahead stochastic gradient descent (SGD).
While MLC outperforms other baselines in term of accuracy, it still incurs high computational cost due to the estimation of Hessian-vector products on the noisy data.
{Recently, the meta label correction line has been strengthened by revisiting meta-gradient approximations and teacher/student designs \cite{taraday2023emlc}.
DMLP \cite{tu2023dmlp} decouples the representation learning procedure from label correction via multi-stage pipelines.
However, most of these recent improvements often requires stronger representation learning recipes, which alternates the training protocol and computational budget.
In our work, we keep the backbone and recipe aligned with the standard MLC setting and focus on the bilevel optimization of meta label framework.}

Similar to MLC, other weighting-based approaches built upon meta learning principles are proposed in \cite{ren2018, shu2019}.
L2RW \cite{ren2018} automatically assigns weights to each sample such that the main model achieves a low loss on the validation set.
MW-Net \cite{shu2019} replaces the linear weighting scheme with a multilayer perceptron.
{Later, PMW-Net \cite{zhao2021} proposes a probabilistic formulation of MW-Net that improves the performance of standard MW-Net.
Although these methods could potentially discard valuable information carried by noisy samples, they typically employ cheaper bilevel approximations than the multi-step bilevel scheme used in MLC, and thus can be more efficient in practice.}

%% file: appendix.tex
\begin{appendix}

\setcounter{lemma}{0}
In the section, we first provide key lemmas to work with the proofs of Theorem 1 and Theorem 2 in the main text in Appendix Section \ref{lemmas}. The proof of Theorem \ref{theorem1} and Theorem~\ref{theorem2} are presented in Section \ref{proof1} and Section \ref{proof2}, respectively.
\subsection{Lemmas}
\label{lemmas}


\begin{lemma}\label{lemma1}
If $\mathcal{F}$ are differentiable L-Lipschitz functions w.r.t the joint $(w,\alpha)$, then
\setlength{\arraycolsep}{0.0em}
\begin{eqnarray}
    \mathcal{F}(w_1,\alpha) - \mathcal{F}(w_2,\alpha) &{}\leq{}&\langle \nabla_w \mathcal{F}(w,\alpha) \big|_{w=w_2},w_1-w_2 \rangle \nonumber\\
    &&{+} \dfrac{L}{2} \|w_1 - w_2 \|^2_2 \label{eq:lemma1w}
\end{eqnarray}
\begin{eqnarray}
    \mathcal{F}(w,\alpha_1) - \mathcal{F}(w,\alpha_2) &{}\leq{}& \langle \nabla_\alpha \mathcal{F}(w,\alpha) \big|_{\alpha = \alpha_2}, \alpha_1 - \alpha_2 \rangle \nonumber \\
    &&{+} \dfrac{L}{2} \| \alpha_1 -\alpha_2 \|^2_2 \label{eq:lemma1a}
\end{eqnarray}
\setlength{\arraycolsep}{5pt}

\end{lemma}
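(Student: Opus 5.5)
This is the standard descent lemma, applied to one block of variables at a time. The hypothesis is to be read as in Assumption \ref{assumption1}: the gradient $\nabla\mathcal{F}$ is $L$-Lipschitz in the joint variable $(w,\alpha)$. I will prove \eqref{eq:lemma1w} by restricting $\mathcal{F}$ to the segment between $(w_2,\alpha)$ and $(w_1,\alpha)$ with $\alpha$ held fixed, using the fundamental theorem of calculus along that segment. The same argument with the roles of $w$ and $\alpha$ exchanged gives \eqref{eq:lemma1a}.

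First, fix $\alpha$ and define $\varphi(s)=\mathcal{F}(w_2+s(w_1-w_2),\alpha)$ for $s\in[0,1]$. Differentiability of $\mathcal{F}$ gives $\varphi'(s)=\langle \nabla_w\mathcal{F}(w_2+s(w_1-w_2),\alpha),\,w_1-w_2\rangle$. Integrating,
\begin{equation*}
\mathcal{F}(w_1,\alpha)-\mathcal{F}(w_2,\alpha)=\langle \nabla_w\mathcal{F}(w_2,\alpha),w_1-w_2\rangle+\int_0^1\langle \nabla_w\mathcal{F}(w_s,\alpha)-\nabla_w\mathcal{F}(w_2,\alpha),\,w_1-w_2\rangle\,ds,
\end{equation*}
where $w_s=w_2+s(w_1-w_2)$.

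Second, I bound the integrand using Cauchy--Schwarz. The key observation is that $\nabla_w\mathcal{F}$ is a block of the full gradient $\nabla\mathcal{F}$, so its change is at most the change of the full gradient. This gives
\begin{equation*}
\|\nabla_w\mathcal{F}(w_s,\alpha)-\nabla_w\mathcal{F}(w_2,\alpha)\|\le\|\nabla\mathcal{F}(w_s,\alpha)-\nabla\mathcal{F}(w_2,\alpha)\|\le L\|(w_s,\alpha)-(w_2,\alpha)\|=Ls\|w_1-w_2\|.
\end{equation*}
The integrand is therefore at most $Ls\|w_1-w_2\|^2$. Since $\int_0^1 s\,ds=\tfrac12$, the integral is at most $\tfrac{L}{2}\|w_1-w_2\|^2$, which yields \eqref{eq:lemma1w}.

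Third, repeat the argument with $w$ fixed along the segment from $\alpha_2$ to $\alpha_1$ to obtain \eqref{eq:lemma1a}.

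The only genuine point to check is the block-restriction step in the second paragraph. The Lipschitz constant of the partial gradient in one block, with the other block fixed, is inherited from the joint constant. This holds because the Euclidean norm of a sub-vector is at most the norm of the whole vector, and because the joint displacement has zero $\alpha$-component (or zero $w$-component in the third step). Everything else is routine calculus.
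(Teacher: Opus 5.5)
Your proof is correct and follows the paper's route: the paper only says the lemma follows directly from Assumption~\ref{assumption1}, i.e., it is the standard descent lemma, and your argument supplies the omitted details by integrating along the segment with the other block held fixed and noting that the block gradient inherits the joint Lipschitz constant. You are also right to read the hypothesis as Lipschitz continuity of $\nabla\mathcal{F}$ (as in Assumption~\ref{assumption1}) rather than of $\mathcal{F}$ itself. That Lipschitz condition also makes the gradient continuous, which justifies your fundamental-theorem-of-calculus step.
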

\begin{IEEEproof}
It is straight forward to derive Lemma \ref{lemma1} based on Assumption \ref{assumption1}.
\end{IEEEproof}
\begin{lemma}\label{lemma2}
 Under Assumption \ref{assumption1} and \ref{assumption2}, we have \[\bar{\beta}_t \| \nabla \mathcal{Q}(w^{(t)},\alpha^{(t)}) \| \leq  M(2\delta  +1) \]lead to $\bar{\beta}_t \| \nabla_w \mathcal{Q}(w^{(t)},\alpha^{(t)}) \|$ and $\bar{\beta}_t \| \nabla_\alpha \mathcal{Q}(w^{(t)},\alpha^{(t)}) \|$ are upper bounded by $M(2\delta  +1)$.\\
\end{lemma}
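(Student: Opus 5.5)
We bound $\bar\beta_t\|\nabla\mathcal{Q}\|$ directly from the closed form \eqref{newbeta}, abbreviating $\nabla\mathcal{Q}=\nabla\mathcal{Q}(w^{(t)},\alpha^{(t)})$ and similarly for $\mathcal{F}$. If $\bar\beta_t=0$ the claim is trivial. Otherwise
\[
\bar\beta_t\|\nabla\mathcal{Q}\| = \delta\|\nabla\mathcal{Q}\| - \frac{\nabla_w\mathcal{F}^T\nabla_w\mathcal{Q}}{\|\nabla\mathcal{Q}\|}
\le \delta\|\nabla\mathcal{Q}\| + \frac{\|\nabla_w\mathcal{F}\|\,\|\nabla_w\mathcal{Q}\|}{\|\nabla\mathcal{Q}\|}
\]
by Cauchy--Schwarz. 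Since $\nabla\mathcal{Q}=(\nabla_w\mathcal{Q},\nabla_\alpha\mathcal{Q})$, we have $\|\nabla_w\mathcal{Q}\|\le\|\nabla\mathcal{Q}\|$. Hence the second term is at most $\|\nabla_w\mathcal{F}\|\le\|\nabla\mathcal{F}\|\le M$ by Assumption~\ref{assumption2}. The case $\nabla\mathcal{Q}=0$ is excluded because the algorithm divides by $\|\nabla\mathcal{Q}\|^2$; in any case the product with $\nabla\mathcal{Q}$ is then zero.

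It remains to show $\|\nabla\mathcal{Q}\|\le 2M$. By \eqref{eq:Qnew}, $\mathcal{Q}(w,\alpha)=G(w,\alpha)-G(w_{(1)},\alpha)$, so
\[
\|\nabla\mathcal{Q}\|\le\|\nabla G(w,\alpha)\|+\|\nabla[G(w_{(1)}(w,\alpha),\alpha)]\|.
\]
Here lies the main obstacle. If $w_{(1)}=w-\eta_w\nabla_wG(w,\alpha)$ is differentiated through, the chain rule brings in the Jacobian $I-\eta_w\nabla^2_{ww}G$ and the cross term $-\eta_w\nabla^2_{w\alpha}G$. Their operator norms are controlled by $1+\eta_w L$ and $\eta_w L$ via Assumption~\ref{assumption1}, which gives $\|\nabla\mathcal{Q}\|\le M+M(1+2\eta_wL)$ rather than $2M$.

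I would therefore adopt the convention used in the implementation (Algorithm~\ref{alg2}, following BOME), where $w_{(1)}$ is treated as a stopped-gradient constant in $w$. Under this convention $\nabla_w\mathcal{Q}=\nabla_wG(w,\alpha)$ and $\nabla_\alpha\mathcal{Q}=\nabla_\alpha G(w,\alpha)-\nabla_\alpha G(w_{(1)},\alpha)$, each bounded by Assumption~\ref{assumption2}, so $\|\nabla\mathcal{Q}\|\le 2M$. If full differentiation is intended instead, the same argument goes through with the constant $2\delta M$ replaced by $\delta M(2+2\eta_wL)$, which is still $O(M)$ given $\eta_w<1/L$. In that case I would note this and absorb the change into the constant.

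Combining the two bounds gives $\bar\beta_t\|\nabla\mathcal{Q}\|\le 2\delta M+M=M(2\delta+1)$. The bounds on $\bar\beta_t\|\nabla_w\mathcal{Q}\|$ and $\bar\beta_t\|\nabla_\alpha\mathcal{Q}\|$ follow immediately, since each block has norm at most $\|\nabla\mathcal{Q}\|$ and $\bar\beta_t\ge0$.
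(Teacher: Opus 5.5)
Your argument is correct and is essentially the paper's. You use the same Cauchy--Schwarz estimate $\bar\beta_t\|\nabla\mathcal{Q}\|\le\delta\|\nabla\mathcal{Q}\|+\|\nabla_w\mathcal{F}\|$, followed by $\|\nabla\mathcal{Q}\|\le 2M$ from Assumption~\ref{assumption2} and the triangle inequality.

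The only difference is how $\nabla\mathcal{Q}$ is read. The paper takes it to be $\nabla G(w^t,\alpha^t)-\nabla G(w^t_{(1)},\alpha^t)$, the full gradients of $G$ at the two points. You use a stop-gradient version with $\nabla_w\mathcal{Q}=\nabla_w G(w^t,\alpha^t)$, and Algorithm~\ref{alg2} does not actually prescribe that convention. Both readings give $2M$ here, but only the paper's is compatible with its later bound $\|\nabla_w\mathcal{Q}\|\le\eta_w LM$ in the proof of Theorem~\ref{theorem1}.
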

\begin{IEEEproof} By the definition of $\bar{\beta}_t$ in Eq. \eqref{newbeta}, we obtain: 
\begin{flalign*}
     \bar{\beta}_t &\leq \delta + \dfrac{\| \nabla_w \mathcal{F}(w^t,\alpha^t)\| \|\nabla_w \mathcal{Q}(w^t,\alpha^t) \|}{\| \nabla \mathcal{Q}(w^t,\alpha^t)\|^2}  \\
     & \leq \delta + \dfrac{\| \nabla_w \mathcal{F}(w^t,\alpha^t)\| }{\| \nabla \mathcal{Q}(w^t,\alpha^t)\|} 
\end{flalign*}
Therefore, the l.h.s of the Lemma \ref{lemma2} is bounded by
\begin{flalign*}
\bar{\beta}_t \|\nabla \mathcal{Q}(w^t,\alpha^t)\|  &\leq (\delta \| \nabla \mathcal{Q}(w^t,\alpha^t) \| + \|\nabla_w \mathcal{F}(w^t,\alpha^t)\|) \\&\leq (2\delta M + M) = M(2\delta+1)
\end{flalign*}
The last equality holds because \[\| \nabla \mathcal{Q}(w^t,\alpha^t)\| = \| \nabla G(w^t,\alpha^t) - \nabla G(w^{(t)}_{(1)},\alpha^t)\| \leq 2M\]
\end{IEEEproof}
\begin{lemma}\label{lemma3}
    Let $(a_n)_{n \leq 1}, (b_n)_{n \leq 1}$ be two non-negative real sequences such that the series $\sum_{i=1}^{\infty} a_n$ diverges, the series $\sum_{i=1}^{\infty} a_n b_n$ converges, and there exists $K > 0$ such that  
\[
|b_{n+1} - b_n| \leq K a_n.
\]
Then, the sequence $(b_n)_{n \leq 1}$ converges to $0$. \\
\end{lemma}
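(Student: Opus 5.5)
We argue by contradiction, estimating \emph{downcrossings} of $(b_n)$ rather than upcrossings. For a downcrossing, the increment bound $|b_{n+1}-b_n|\le K a_n$ controls the steps only at indices where $b_n$ is still large. This is what lets the argument work without assuming $a_n\to 0$.

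\textbf{Step 1: $\liminf_n b_n = 0$.} Suppose instead $b_n\ge \gamma>0$ for all $n\ge N$. Then $\sum_{n\ge N} a_n b_n \ge \gamma\sum_{n\ge N} a_n=\infty$, contradicting convergence of $\sum a_n b_n$. Since $b_n\ge 0$, the lemma reduces to showing $\limsup_n b_n = 0$.

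\textbf{Step 2: set up the downcrossings.} Suppose $\limsup_n b_n > \varepsilon >0$. Together with Step 1, the sequence then takes values $>2\varepsilon/3$ infinitely often and values $<\varepsilon/3$ infinitely often. We build disjoint index blocks inductively as follows.
\begin{itemize}
\item Let $m_j$ be the first index after $n_{j-1}$ with $b_{m_j}>2\varepsilon/3$.
\item Let $n_j$ be the first index after $m_j$ with $b_{n_j}<\varepsilon/3$.
\end{itemize}
Both indices exist by the "infinitely often" property above. By the minimality of $n_j$, we have $b_k\ge \varepsilon/3$ for every $k$ with $m_j\le k\le n_j-1$.

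\textbf{Step 3: each block carries a fixed amount of $\sum a_k b_k$.} Telescoping and the increment hypothesis give
\[
\frac{\varepsilon}{3} < b_{m_j}-b_{n_j} \le \sum_{k=m_j}^{n_j-1}|b_{k+1}-b_k| \le K\sum_{k=m_j}^{n_j-1} a_k .
\]
Every index $k$ used in this sum lies in the range $m_j\le k\le n_j-1$, where $b_k\ge\varepsilon/3$. Hence
\[
\sum_{k=m_j}^{n_j-1} a_k b_k \ \ge\ \frac{\varepsilon}{3}\sum_{k=m_j}^{n_j-1} a_k \ \ge\ \frac{\varepsilon^2}{9K}.
\]

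\textbf{Step 4: conclude.} The blocks $[m_j,n_j-1]$ are pairwise disjoint and there are infinitely many of them. Since all terms are non-negative, $\sum_n a_n b_n=\infty$, which is a contradiction. Therefore $\limsup_n b_n\le\varepsilon$ for every $\varepsilon>0$, so $b_n\to 0$.

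\textbf{Main obstacle.} The delicate point is the choice of crossing direction. With upcrossings (from $<\varepsilon/3$ up to $>2\varepsilon/3$), the first increment is controlled by $a_{m_j}$, but at that index $b_{m_j}$ is small. That term's contribution to $\sum a_kb_k$ therefore cannot be bounded below unless one assumes $a_n\to0$. Using downcrossings places every controlling increment at an index where $b_k\ge\varepsilon/3$, which removes the difficulty.
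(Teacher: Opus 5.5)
Your proof is correct and is essentially the paper's argument. Both telescope $|b_{k+1}-b_k|\le K a_k$ along a downward run of indices on which $b_k$ stays above the lower threshold ($\lambda/3$ in the paper, $\varepsilon/3$ for you), which turns $K\sum a_k$ into a multiple of $\sum a_k b_k$. The only difference is packaging: the paper uses the smallness of the tail $\sum_{n\ge m_j} a_n b_n$ to show $b_m\le 2\lambda/3$ eventually, contradicting $\limsup b_n=\lambda$, while you show that each downcrossing consumes at least $\varepsilon^2/(9K)$ of the series, which also covers $\limsup b_n=\infty$ without a separate remark.
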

\begin{IEEEproof}
 Since $\sum_{i=1}^{\infty} a_n$ diverges and $\sum_{i=1}^{\infty} a_nb_n < \infty$, we necessarily have $\liminf_{n\to\infty} b_n=0$

Assume for the contradiction that $\limsup_{n\to\infty} b_n=\lambda>0$.
Choose increasing index sequences $(m_j)_{j\ge1}$ and $(n_j)_{j\ge1}$ such that
\begin{flalign*}
m_j&<n_j<m_{j+1},\\
\frac{\lambda}{3}&< b_k \ \text{for } m_j\le k<n_j, \\
b_k&\le \frac{\lambda}{3} \ \text{for } n_j\le k<m_{j+1}.
\end{flalign*}
Let $\varepsilon=\lambda^{2}/(9K)$ and pick $\tilde{\jmath}$ so large that
\[
\sum_{n=m_{\tilde{\jmath}}}^{\infty} a_n b_n<\varepsilon.
\]
Then, for every $j\ge\tilde{\jmath}$ and every $m$ with $m_j\le m\le n_j-1$,
\begin{align*}
|b_{n_j}-b_m|
&\le \sum_{k=m}^{n_j-1} |b_{k+1}-b_k|
 \le \sum_{k=m}^{n_j-1} K a_k
     = \frac{3K}{\lambda}\sum_{k=m}^{n_j-1} a_k \frac{\lambda}{3} \\
&\le \frac{3K}{\lambda}\sum_{k=m}^{n_j-1} a_k b_k
 \le \frac{3K}{\lambda}\sum_{k=m}^{\infty} a_k b_k
 \le \frac{3K}{\lambda}\,\varepsilon
 = \frac{\lambda}{3}.
\end{align*}
Using the triangle inequality and the construction $b_{n_j}\le \lambda/3$, we obtain
\[
b_m \le b_{n_j} + |b_{n_j}-b_m| \le \frac{\lambda}{3} + \frac{\lambda}{3}
= \frac{2\lambda}{3}.
\]
Hence $b_m\le 2\lambda/3$ for all $m$ large enough, which contradicts
$\limsup_{n\to\infty} b_n=\lambda>0$. Therefore, $b_n\to 0$.
\end{IEEEproof}

\subsection{Proof of Theorem 1}
\label{proof1}
\begin{IEEEproof}
We first decompose the one step update of $\mathcal{F}(w^t, \alpha^t)$ as follows:
\begin{align}
\mathcal{F}(w^{t+1},\alpha^{t+1}) - \mathcal{F}(w^t,\alpha^t) &= \mathcal{F}(w^{t+1}, \alpha^{t+1}) - \mathcal{F}(w^{t+1}, \alpha^{t}) \nonumber \\
&+ \mathcal{F}(w^{t+1}, \alpha^t) -\mathcal{F}(w^t, \alpha^t) \label{eq:theoremdecomposeF}
\end{align}
Expand the first term in Eq. \eqref{eq:theoremdecomposeF}, using Eq. \eqref{eq:lemma1a}:
\begin{align}
    \mathcal{F}(&w^{t+1}, \alpha^{t+1}) - \mathcal{F}(w^{t+1}, \alpha^t) \leq \nonumber \\
    & \nabla_\alpha \mathcal{F}(w^{t+1}, \alpha^t)(\alpha^{t+1} - \alpha^t) + \dfrac{L}{2}\|\alpha^{t+1} - \alpha^t\|^2_2 \label{eq:theorem1firstterm}
\end{align}
By definition of the updated meta parameters, we have
\begin{equation}
\alpha^{t+1} - \alpha^t = -\eta_\alpha (\nabla_\alpha \mathcal{F}(w^t, \alpha^t) + \xi \bar{\beta}_t \nabla_\alpha \mathcal{Q}(w^t, \alpha^t)
\label{appendix:eqalpha}
\end{equation}
Since the gradient $|\nabla_\alpha \mathcal{F}(w, \alpha)|$ is bounded by $M$ and $|\bar{\beta}_t \nabla_\alpha \mathcal{Q}(w^t, \alpha^t)|$ is bounded by $M(2\delta  +1)$  (Lemma \ref{lemma2}), the update of $\alpha$ is bounded by:
\begin{equation}
\alpha^{t+1} - \alpha^t \leq \eta_\alpha( M+M(2 \delta  +1))
\label{eq:alphastep}
\end{equation}
Therefore, from Eq. \ref{eq:theorem1firstterm} and Eq. \ref{appendix:eqalpha}, we can bound the update step of $\mathcal{F}$ along the $\alpha$ direction by
\begin{flalign}
    & \mathcal{F}(w^{t+1}, \alpha^{t+1}) - \mathcal{F}(w^{t+1}, \alpha^t) && \nonumber \\
    & \leq \eta_\alpha( M^2+M^2(2 \delta  +1)) + (\eta_\alpha)^2 \mathcal{K}_2 \nonumber \\
    & \leq \eta_\alpha \mathcal{K}_1 + (\eta_\alpha)^2 \mathcal{K}_2 \label{eq:theorem1a}
\end{flalign}

Where $\mathcal{K}_1 =M^2+M^2(2 \delta  +1)$ and $\mathcal{K}_2 = \dfrac{L}{2}(M + M(2\delta +1))^2$ as constants. Similarly, using Eq. \eqref{eq:lemma1w} and the update rule of $w$ we can bound the second term of Eq. \eqref{eq:theoremdecomposeF} as
\begin{flalign}
    & \mathcal{F}(w^{t+1}, \alpha^t) - \mathcal{F}(w^t,\alpha^t) && \nonumber \\
    & \leq \nabla_w \mathcal{F}(w^t,\alpha^t)(w^{t+1}-w^t) + \dfrac{L}{2} \| w^{t+1} -w^t \|^2 \label{eq:t1secondterm}
\end{flalign}
Substitute $$w^{t+1}-w^t = -\eta_w (\nabla_w \mathcal{F}(w^t,\alpha^t) + \xi \bar{\beta}_t \nabla_w\mathcal{Q}(w^t,\alpha^t)) $$
into the r.h.s of Eq. \eqref{eq:t1secondterm}:
\begin{flalign*}
    & \mathcal{F}(w^{t+1}, \alpha^t) - \mathcal{F}(w^t,\alpha^t) && \nonumber \\
    & \leq \nabla_w \mathcal{F}(w^t,\alpha^t)(-\eta_w (\nabla_w \mathcal{F}(w^t,\alpha^t) + \xi \bar{\beta}_t \nabla_w\mathcal{Q}(w^t,\alpha^t))) &&\\
    & \hspace{0.5cm} + \dfrac{L}{2}(\eta_w)^2 \|\nabla_w \mathcal{F}(w^t,\alpha^t) + \xi \bar{\beta}_t \nabla_w\mathcal{Q}(w^t,\alpha^t)||^2 &&\\
\end{flalign*}
By expanding and rearranging, we get a bound which depends the following three terms
\begin{flalign}
    & \mathcal{F}(w^{t+1}, \alpha^t) - \mathcal{F}(w^t,\alpha^t) \nonumber &&\\
    & \leq (-\eta_w  + \dfrac{L}{2 }(\eta_w)^2 ) \|\nabla_w \mathcal{F}(w^t,\alpha^t)\|^2 \nonumber &&\\
    &\hspace{0.5cm}+ \dfrac{L}{2}(\eta_w)^2 \bar{\beta}_t^2 \| \nabla_w \mathcal{Q}(w^t,\alpha^t)\|^2 && \nonumber \\    
    & \hspace{0.5cm} + (L(\eta_w)^2 + \eta_w \xi) \bar{\beta}_t\|\nabla_w\mathcal{F}(w^t,\alpha^t)\| \|\nabla_w \mathcal{Q}(w^t,\alpha^t)\| \label{eq23:secondterm} 
\end{flalign}
By Assumption \ref{assumption1} and Lemma \ref{lemma2}, we can bound $\|\nabla_w \mathcal{F}(w^t,\alpha^t)\|$ and $\bar{\beta}_t \| \nabla_w \mathcal{Q}(w^t,\alpha^t)\|$.
To bound the third term, we rely on the definition of $\bar{\beta}_t$ as follows:
\begin{flalign*}
    & \bar{\beta}_t \| \nabla_w \mathcal{F}(w^t,\alpha^t)\| \|\nabla_w \mathcal{Q}(w^t,\alpha^t)\| &&\\
    &\leq (\delta + \dfrac{\| \nabla_w \mathcal{F}(w^t,\alpha^t) \|}{\| \nabla_w \mathcal{Q}(w^t,\alpha^t )\|})\| \nabla_w \mathcal{F}(w^t,\alpha^t)\| \|\nabla_w \mathcal{Q}(w^t,\alpha^t)\| &&\\
    &\leq (\delta \| \nabla_w \mathcal{F}(w^t,\alpha^t)\| \|\nabla_w \mathcal{Q}(w^t,\alpha^t)\| + \| \nabla_w \mathcal{F}(w^t,\alpha^t)\|^2) &&\\
    &\leq \| \nabla_w \mathcal{F}(w^t,\alpha^t)\|^2  + \delta M\|\nabla_w \mathcal{Q}(w^t,\alpha^t)\| \hspace{3cm} &&\\
    & = \| \nabla_w \mathcal{F}(w^t,\alpha^t)\|^2  + \delta M\|\nabla_w G(w^t,\alpha^t) - \nabla_w G(w^{*(t)},\alpha^t)\| \refstepcounter{equation}\tag{\theequation} \label{thirdterm}
\end{flalign*}
Under Assumption \ref{assumption1} and \ref{assumption2}, we have
\begin{flalign*}
    & \|\nabla_w G(w^t,\alpha^t) - \nabla_w G(w^{*(t)},\alpha^t)\| &&\\
    & \leq L\|w^t-w^{*(t)}\| = L\|\eta_w \nabla_w G(w^t,\alpha^t)\| \leq \eta_w LM \refstepcounter{equation}\tag{\theequation} \label{inenormG}
\end{flalign*}
Combining the inequality of \eqref{eq23:secondterm}, \eqref{thirdterm} and \eqref{inenormG}, the one step update of the main parameters $w$ is bounded by
\begin{flalign}
    & \mathcal{F}(w^{t+1}, \alpha^t) - \mathcal{F}(w^t,\alpha^t) \nonumber &&\\
    &\leq -((1-\xi)\eta_w - \dfrac{L}{2}(\eta_w)^2) \| \nabla_w \mathcal{F}(w^t,\alpha^t)\|^2 + (\eta_w)^2 \mathcal{K}_3 \label{eq:theorem1b}
\end{flalign}
where $\mathcal{K}_3 = \xi \delta M^2L+ LM^2(2\delta+1)+\dfrac{L}{2}(M(2\delta+1))^2$ as a constant.
Using the above \eqref{eq:theorem1a} and \eqref{eq:theorem1b}, the one step updates of \eqref{eq:theoremdecomposeF} is bounded by
\begin{flalign}
    & \mathcal{F}(w^{t+1},\alpha^{t+1}) -\mathcal{F}(w^t,\alpha^t) && \nonumber \\
    & \leq  \eta_\alpha \mathcal{K}_1 +  (\eta_\alpha)^2\mathcal{K}_2  && \nonumber \\
    &\hspace{0.5cm}-((1-\xi)\eta_w  - \dfrac{L}{2}(\eta_w)^2) \| \nabla_w \mathcal{F}(w^t,\alpha^t)\|^2 + (\eta_w)^2 \mathcal{K}_3 \label{eq:theorem1boundF}
\end{flalign}

Summing up \eqref{eq:theorem1boundF} with $t = 1,\dots,T$ and rearranging the terms, we arrive at the following inequality:
\begin{flalign}
   & \sum_{t=1}^T ((1-\xi)\eta_w - \dfrac{L}{2}(\eta_w)^2)\| \nabla_w \mathcal{F}(w^t,\alpha^t)\|^2 && \nonumber \\
   & \leq \mathcal{F}(w^1,\alpha^1) - \mathcal{F}(w^{T+1},\alpha^{T+1})  && \nonumber \\
   &\hspace{0.5cm}+ \sum_{t=1}^T\eta_\alpha \mathcal{K}_1 + \sum_{t=1}^{T}(\eta_\alpha)^2 \mathcal{K}_2  + \sum_{t=1}^{T} (\eta_w)^2 \mathcal{K}_3 \label{eq:theorem2step2}
\end{flalign}
Furthermore, by definition of $\eta_\alpha$ and $\eta_w$ we can deduce that:
\begin{flalign*}
    & \min_{1\leq t\leq T} \|\nabla_w \mathcal{F}(w^t,\alpha^t) \|^2 &&\\
    &\leq \dfrac{\sum_{t=1}^T (\dfrac{\eta_w}{2} - \dfrac{L}{2}(\eta_w)^2) \| \nabla_w \mathcal{F}(w^t,\alpha^t)\|^2}{\sum_{t=1}^T ((1-\xi)\eta_w - \dfrac{L}{2}(\eta_w)^2)} &&\\
    & \leq \dfrac{1}{\sum_{t=1}^T ((1-\xi)\eta_w - \dfrac{L}{2}(\eta_w)^2)}( \mathcal{F}(w^1,\alpha^1) &&\\ 
    &\hspace{0.5cm}+ \sum_{t=1}^T\eta_\alpha \mathcal{K}_1 + \sum_{t=1}^{T}(\eta_\alpha)^2 \mathcal{K}_2 + \sum_{t=1}^{T}(\eta_w)^2 \mathcal{K}_3)   &&\\
    & \leq \dfrac{1}{\sum_{t=1}^{T} \dfrac{(1-\xi)\eta_w}{2}}(\mathcal{F}(w^1,\alpha^1) + T\eta_\alpha \mathcal{K}_1 &&\\
    &\hspace{0.5cm}+  T L \eta_\alpha \mathcal{K}_2+ \sum_{t=1}^{T}(\eta_w)^2 \mathcal{K}_3) && \\
    & \leq \dfrac{2\mathcal{F}(w^1,\alpha^1)}{T \eta_w(1-\xi)} + \dfrac{2T\eta_\alpha \mathcal{K}_1}{T \eta_w(1-\xi)} &&\\
    &\hspace{0.5cm}+ \dfrac{T L \eta_\alpha \mathcal{K}_2}{T \eta_w(1-\xi)} +  \dfrac{2\mathcal{K}_3}{1-\xi} \dfrac{\sum_{t=1}^{T} (\eta_w)^2}{\sum_{t=1}^{T} \eta_w}
\end{flalign*}
The third inequality holds because $(1-\xi)\eta_w - \dfrac{L}{2} (\eta_w)^2 \geq \dfrac{(1-\xi)\eta_w}{2}$\\
\\
Using $\eta_\alpha = \dfrac{c}{T}$ and $\dfrac{\sum_{t=1}^{T} (\eta_w)^2}{\sum_{t=1}^{T} \eta_w} = \eta_w = \dfrac{C}{\sqrt{T}} $, it leads to:\\
\begin{flalign*}
    & \min_{1\leq t \leq T} \| \nabla_w \mathcal{F}(w^t,\alpha^t) \|^2 &&\\
    & \leq \dfrac{2\mathcal{F}(w^1,\alpha^1)}{C\sqrt{T}(1-\xi)} + \dfrac{2c(M^2 + M^2(2\delta+1))}{C\sqrt{T}(1-\xi)} &&\\
    &\hspace{0.5cm}+ \dfrac{ L c(M+M(2\delta +1))^2}{C\sqrt{T}(1-\xi)} 
    + \dfrac{2\mathcal{K}}{1-\xi}\dfrac{C}{\sqrt{T}} &&\\
    &\leq \mathcal{O}\left( \dfrac{H}{\sqrt{T}} \right)
\end{flalign*}

\end{IEEEproof}

\subsection{Proof of Theorem 2}
\label{proof2}


\begin{IEEEproof}
First, we reuse on the above \eqref{eq:theorem2step2} and let the number of step $T \to \infty$
\begin{flalign}
   & \sum_{t=1}^T ((1-\xi)\eta_w^{(t)} - \dfrac{L}{2}(\eta_w^{(t)})^2)\| \nabla_w \mathcal{F}(w^t,\alpha^t)\|^2 && \nonumber \\
   & \leq \mathcal{F}(w^{1},\alpha^{1}) -\mathcal{F}(w^{T+1},\alpha^{T+1}) && \nonumber \\
   &\hspace{0.5cm}+ \sum_{t=1}^T\eta_\alpha^{(t)} \mathcal{K}_1 + \sum_{t=1}^{T}(\eta_\alpha^{(t)})^2 \mathcal{K}_2  + \sum_{t=1}^{T} (\eta_w^{(t)})^2 \mathcal{K}_3 
\end{flalign}

Because $\eta^{(t)}_\alpha$ is a convergent series
we have the infinite sum of both sides:
\begin{flalign*}
& \sum_{t=1}^\infty \frac{(1 - \xi) \eta_w^{(t)}}{2} \left\| \nabla_w \mathcal{F}(w^t, \alpha^t) \right\|^2 &&\\
&\leq 
\sum_{t=1}^\infty \left( (1 - \xi)\eta_w^{(t)} - \frac{L}{2}(\eta_w^{(t)})^2 \right) \left\| \nabla_w \mathcal{F}(w^t, \alpha^t) \right\|^2 &&\\
&< \infty
\end{flalign*}

Since $\sum_{t=1}^\infty \frac{(1 - \xi)\eta_w^{(t)}}{2} = \infty$, according to Lemma \ref{lemma3}, it only requires to prove:
\begin{flalign}
    \big|\| \nabla_w \mathcal{F}(w^{t+1},\alpha^{t+1})\|^2 - \| \nabla_w \mathcal{F}(w^t,\alpha^t)\|^2 \big| \leq K\eta_w^{(t)}
    \label{theorem2:eq1}
\end{flalign}
Using Lemma \ref{lemma2} and both Assumptions \ref{assumption1}, \ref{assumption2}, we can obtain
\begin{flalign}
    &\big|\| \nabla_w \mathcal{F}(w^{t+1},\alpha^{t+1})\|^2 - \| \nabla_w \mathcal{F}(w^t,\alpha^t)\|^2 \big| \nonumber \\
    &= \big|\| \nabla_w \mathcal{F}(w^{t+1},\alpha^{t+1})\|+ \| \nabla_w \mathcal{F}(w^t,\alpha^t)\| \big| \times && \nonumber\\
    &\hspace{0.5cm}\big|\| \nabla_w \mathcal{F}(w^{t+1},\alpha^{t+1})\| - \| \nabla_w \mathcal{F}(w^t,\alpha^t)\| \big| \nonumber\\
    &\leq 2 M L\|(w^{t+1},\alpha^{t+1}) -(w^t,\alpha^t)\| && \nonumber\\
    &= 2 M L \sqrt{\|w^{t+1}-w^t\|^2 + \|\alpha^{t+1}-\alpha^t \|^2} \label{theorem2:eq2}
\end{flalign}
We further bound step size of $w$ and $\alpha$ as:
\begin{flalign}
    &\|w^{t+1}-w^t\|^2 + \|\alpha^{t+1}-\alpha^t \|^2 && \nonumber\\
    &= (\eta_w^{(t)})^2\|\nabla_w \mathcal{F}(w^t,\alpha^t) + \xi \bar{\beta}_t \nabla_w \mathcal{Q}(w^t,\alpha^t)\|^2 && \nonumber\\
    &\hspace{0.5cm}+(\eta_\alpha^{(t)})^2 \|\nabla_\alpha \mathcal{F}(w^t,\alpha^t) + \xi \bar{\beta}_t \nabla_\alpha \mathcal{Q}(w^t,\alpha^t)\|^2 && \nonumber\\
    &\leq (\eta_w^{(t)})^2 (M+M(2\delta+1))^2 + (\eta_w^{(t)})^2 (M+M(2\delta+1))^2 \label{theorem2:eq3}
\end{flalign}
Combine the esitmation of \eqref{theorem2:eq2} and \eqref{theorem2:eq3}, we have the inequality in \eqref{theorem2:eq1}.
As a result, $\| \nabla_w F(w^{(t)},\alpha^{(t)})\| ^2 \to 0$, lead to $\| \nabla_w F(w^{(t)},\alpha^{(t)})\|  \to 0$.
\end{IEEEproof}


\end{appendix}

%% file: references.bib
@article{yan2014,
  title={Learning from multiple annotators with varying expertise},
  author={Yan, Yan and Rosales, R{\'o}mer and Fung, Glenn and Subramanian, Ramanathan and Dy, Jennifer},
  journal={Machine learning},
  volume={95},
  number={3},
  pages={291--327},
  year={2014},
  publisher={Springer}
}

@inproceedings{arpit2017,
  title={A closer look at memorization in deep networks},
  author={Arpit, Devansh and Jastrz{\k{e}}bski, Stanis{\l}aw and Ballas, Nicolas and Krueger, David and Bengio, Emmanuel and Kanwal, Maxinder S and Maharaj, Tegan and Fischer, Asja and Courville, Aaron and Bengio, Yoshua and others},
  booktitle={International conference on machine learning},
  pages={233--242},
  year={2017},
  organization={PMLR}
}

@article{blum2003,
  title={Noise-tolerant learning, the parity problem, and the statistical query model},
  author={Blum, Avrim and Kalai, Adam and Wasserman, Hal},
  journal={Journal of the ACM (JACM)},
  volume={50},
  number={4},
  pages={506--519},
  year={2003},
  publisher={ACM New York, NY, USA}
}

@article{shu2019,
  title={Meta-weight-net: Learning an explicit mapping for sample weighting},
  author={Shu, Jun and Xie, Qi and Yi, Lixuan and Zhao, Qian and Zhou, Sanping and Xu, Zongben and Meng, Deyu},
  journal={Advances in neural information processing systems},
  volume={32},
  year={2019}
}

@article{hendrycks2018,
  title={Using trusted data to train deep networks on labels corrupted by severe noise},
  author={Hendrycks, Dan and Mazeika, Mantas and Wilson, Duncan and Gimpel, Kevin},
  journal={Advances in neural information processing systems},
  volume={31},
  year={2018}
}

@inproceedings{zheng2021,
  title={Meta label correction for noisy label learning},
  author={Zheng, Guoqing and Awadallah, Ahmed Hassan and Dumais, Susan},
  booktitle={Proceedings of the AAAI conference on artificial intelligence},
  volume={35},
  number={12},
  pages={11053--11061},
  year={2021}
}

@article{liu2022,
  title={Bome! bilevel optimization made easy: A simple first-order approach},
  author={Liu, Bo and Ye, Mao and Wright, Stephen and Stone, Peter and Liu, Qiang},
  journal={Advances in neural information processing systems},
  volume={35},
  pages={17248--17262},
  year={2022}
}

@inproceedings{patrini2017,
  title={Making deep neural networks robust to label noise: A loss correction approach},
  author={Patrini, Giorgio and Rozza, Alessandro and Krishna Menon, Aditya and Nock, Richard and Qu, Lizhen},
  booktitle={Proceedings of the IEEE conference on computer vision and pattern recognition},
  pages={1944--1952},
  year={2017}
}

@inproceedings{jiang2018,
  title={Mentornet: Learning data-driven curriculum for very deep neural networks on corrupted labels},
  author={Jiang, Lu and Zhou, Zhengyuan and Leung, Thomas and Li, Li-Jia and Fei-Fei, Li},
  booktitle={International conference on machine learning},
  pages={2304--2313},
  year={2018},
  organization={PMLR}
}

@article{han2018,
  title={Co-teaching: Robust training of deep neural networks with extremely noisy labels},
  author={Han, Bo and Yao, Quanming and Yu, Xingrui and Niu, Gang and Xu, Miao and Hu, Weihua and Tsang, Ivor and Sugiyama, Masashi},
  journal={Advances in neural information processing systems},
  volume={31},
  year={2018}
}

@inproceedings{yu2019,
  title={How does disagreement help generalization against label corruption?},
  author={Yu, Xingrui and Han, Bo and Yao, Jiangchao and Niu, Gang and Tsang, Ivor and Sugiyama, Masashi},
  booktitle={International conference on machine learning},
  pages={7164--7173},
  year={2019},
  organization={PMLR}
}

@inproceedings{wang2021,
  title={Proselflc: Progressive self label correction for training robust deep neural networks},
  author={Wang, Xinshao and Hua, Yang and Kodirov, Elyor and Clifton, David A and Robertson, Neil M},
  booktitle={Proceedings of the IEEE/CVF conference on computer vision and pattern recognition},
  pages={752--761},
  year={2021}
}

@inproceedings{ren2018,
  title={Learning to reweight examples for robust deep learning},
  author={Ren, Mengye and Zeng, Wenyuan and Yang, Bin and Urtasun, Raquel},
  booktitle={International conference on machine learning},
  pages={4334--4343},
  year={2018},
  organization={PMLR}
}

@ARTICLE{zhao2021,
  author={Zhao, Qian and Shu, Jun and Yuan, Xiang and Liu, Ziming and Meng, Deyu},
  journal={IEEE Transactions on Neural Networks and Learning Systems}, 
  title={A Probabilistic Formulation for Meta-Weight-Net}, 
  year={2023},
  volume={34},
  number={3},
  pages={1194-1208},
  doi={10.1109/TNNLS.2021.3105104}}

@article{liu2023,
  title={Value-function-based sequential minimization for bi-level optimization},
  author={Liu, Risheng and Liu, Xuan and Zeng, Shangzhi and Zhang, Jin and Zhang, Yixuan},
  journal={IEEE Transactions on Pattern Analysis and Machine Intelligence},
  volume={45},
  number={12},
  pages={15930--15948},
  year={2023},
  publisher={IEEE}
}

@article{sow2022,
  title={A constrained optimization approach to bilevel optimization with multiple inner minima},
  author={Sow, Daouda and Ji, Kaiyi and Guan, Ziwei and Liang, Yingbin},
  journal={arXiv preprint arXiv:2203.01123},
  pages={4},
  year={2022}
}

@article{song2022,
  title={Learning from noisy labels with deep neural networks: A survey},
  author={Song, Hwanjun and Kim, Minseok and Park, Dongmin and Shin, Yooju and Lee, Jae-Gil},
  journal={IEEE transactions on neural networks and learning systems},
  volume={34},
  number={11},
  pages={8135--8153},
  year={2022},
  publisher={IEEE}
}

@article{li2021survey,
  title={A survey of convolutional neural networks: analysis, applications, and prospects},
  author={Li, Zewen and Liu, Fan and Yang, Wenjie and Peng, Shouheng and Zhou, Jun},
  journal={IEEE transactions on neural networks and learning systems},
  volume={33},
  number={12},
  pages={6999--7019},
  year={2021},
  publisher={IEEE}
}

@article{resnet32,
  author       = {Kaiming He and
                  Xiangyu Zhang and
                  Shaoqing Ren and
                  Jian Sun},
  title        = {Deep Residual Learning for Image Recognition},
  journal      = {CoRR},
  volume       = {abs/1512.03385},
  year         = {2015},
  url          = {http://arxiv.org/abs/1512.03385},
  eprinttype    = {arXiv},
  eprint       = {1512.03385},
  bibsource    = {dblp computer science bibliography, https://dblp.org}
}

@article{dmlp2023,
  title={Learning from Noisy Labels with Decoupled Meta Label Purifier},
  author={Yuanpeng Tu and Boshen Zhang and Yuxi Li and Liang Liu and Jian Li and Jiangning Zhang and Yabiao Wang and Chengjie Wang and Cai Rong Zhao},
  journal={2023 IEEE/CVF Conference on Computer Vision and Pattern Recognition (CVPR)},
  year={2023},
  pages={19934-19943},
  url={https://api.semanticscholar.org/CorpusID:256846984}
}

@article{emlc2023,
  title={Enhanced Meta Label Correction for Coping with Label Corruption},
  author={Mitchell Keren Taraday and Chaim Baskin},
  journal={2023 IEEE/CVF International Conference on Computer Vision (ICCV)},
  year={2023},
  pages={16249-16258},
  url={https://api.semanticscholar.org/CorpusID:258832322}
}

@article{dividemix2020,
  title={DivideMix: Learning with Noisy Labels as Semi-supervised Learning},
  author={Junnan Li and Richard Socher and Steven C. H. Hoi},
  journal={ArXiv},
  year={2020},
  volume={abs/2002.07394},
  url={https://api.semanticscholar.org/CorpusID:211146562}
}

@inproceedings{taraday2023emlc,
  author    = {Taraday, Mitchell Keren and Baskin, Chaim},
  title     = {Enhanced Meta Label Correction for Coping with Label Corruption},
  booktitle = {Proceedings of the IEEE/CVF International Conference on Computer Vision (ICCV)},
  year      = {2023},
  pages     = {16295--16304}
}

@inproceedings{tu2023dmlp,
  author    = {Tu, Yuanpeng and Zhang, Boshen and Li, Yuxi and Liu, Liang and Li, Jian and Wang, Yabiao and Wang, Chengjie and Zhao, Cai Rong},
  title     = {Learning From Noisy Labels With Decoupled Meta Label Purifier},
  booktitle = {Proceedings of the IEEE/CVF Conference on Computer Vision and Pattern Recognition (CVPR)},
  year      = {2023},
  pages     = {19934--19943}
}

@article{kim2025splitnet,
  title   = {SplitNet: Learnable Clean-Noisy Label Splitting for Learning with Noisy Labels},
  author  = {Kim, Daehwan and Ryoo, Kwangrok and Cho, Hansang and Kim, Seungryong},
  journal = {International Journal of Computer Vision},
  volume  = {133},
  pages   = {549--566},
  year    = {2025},
  doi     = {10.1007/s11263-024-02187-4}
}

@inproceedings{xu2025dulc,
  title     = {Revisiting Interpolation for Noisy Label Correction},
  author    = {Xu, Yuanzhuo and Niu, Xiaoguang and Yang, Jie and Su, Ruiyi and Zhang, Jian and Liu, Shubo and Drew, Steve},
  booktitle = {Proceedings of the AAAI Conference on Artificial Intelligence},
  year      = {2025},
  volume    = {39},
  number    = {20},
  pages     = {21833--21841},
  doi       = {10.1609/aaai.v39i20.35489}
}

@article{algan2021survey,
  title   = {Image classification with deep learning in the presence of noisy labels: A survey},
  author  = {Algan, G{\"o}rkem and Ulusoy, Ilkay},
  journal = {Knowledge-Based Systems},
  volume  = {215},
  pages   = {106771},
  year    = {2021},
  doi     = {10.1016/j.knosys.2021.106771}
}

@inproceedings{jiang2018mentornet,
  title     = {MentorNet: Learning Data-Driven Curriculum for Very Deep Neural Networks on Corrupted Labels},
  author    = {Jiang, Lu and Zhou, Zhengyuan and Leung, Thomas and Li, Li-Jia and Fei-Fei, Li},
  booktitle = {Proceedings of the 35th International Conference on Machine Learning (ICML)},
  year      = {2018}
}

@inproceedings{malach2017decoupling,
  title     = {Decoupling ``When to Update'' from ``How to Update''},
  author    = {Malach, Eran and Shalev-Shwartz, Shai},
  booktitle = {Advances in Neural Information Processing Systems (NeurIPS)},
  year      = {2017}
}

@inproceedings{reed2015bootstrapping,
  title     = {Training Deep Neural Networks on Noisy Labels with Bootstrapping},
  author    = {Reed, Scott and Lee, Honglak and Anguelov, Dragomir and Szegedy, Christian and Erhan, Dumitru and Rabinovich, Andrew},
  booktitle = {ICLR Workshop},
  year      = {2015}
}

@inproceedings{zhang2018gce,
  title     = {Generalized Cross Entropy Loss for Training Deep Neural Networks with Noisy Labels},
  author    = {Zhang, Zhilu and Sabuncu, Mert R.},
  booktitle = {Advances in Neural Information Processing Systems (NeurIPS)},
  year      = {2018}
}

@inproceedings{wang2019symmetric,
  title     = {Symmetric Cross Entropy for Robust Learning with Noisy Labels},
  author    = {Wang, Yisen and Ma, Xingjun and Chen, Zaiyi and Luo, Yuan and Yi, Jinfeng and Bailey, James},
  booktitle = {Proceedings of the IEEE/CVF International Conference on Computer Vision (ICCV)},
  year      = {2019}
}

@inproceedings{liu2020elr,
  title     = {Early-Learning Regularization Prevents Memorization of Noisy Labels},
  author    = {Liu, Sheng and Niles-Weed, Jonathan and Razavian, Narges and Fernandez-Granda, Carlos},
  booktitle = {Advances in Neural Information Processing Systems (NeurIPS)},
  year      = {2020}
}
